\def\eqref#1{equation~\ref{#1}}
\def\1{\bm{1}}
\def\rtau{{\textnormal{$\tau$}}}
\def\rv{{\textnormal{v}}}
\def\rvv{{\mathbf{v}}}
\def\rmX{{\mathbf{X}}}
\def\rmY{{\mathbf{Y}}}
\def\vtheta{{\bm{\theta}}}
\def\vphi{{\bm{\phi}}}
\def\vb{{\bm{b}}}
\def\vg{{\bm{g}}}
\def\vh{{\bm{h}}}
\def\vx{{\bm{x}}}
\def\vy{{\bm{y}}}
\def\vomega{{\bm{\omega}}}
\def\mA{{\bm{A}}}
\DeclareMathAlphabet{\mathsfit}{\encodingdefault}{\sfdefault}{m}{sl}
\SetMathAlphabet{\mathsfit}{bold}{\encodingdefault}{\sfdefault}{bx}{n}
\newcommand{\E}{\mathbb{E}}
\newcommand{\cL}{\mathcal{L}}
\newcommand{\T}{\mathcal{T}}
\newcommand{\R}{\mathbb{R}}
\newcommand{\N}{\mathbb{N}}
\newcommand{\Var}{\mathrm{Var}}
\newcommand{\pder}[2][]{\frac{\partial#1}{\partial#2}}
\newcommand{\norm}[1]{\left\lVert#1\right\rVert}
\theoremstyle{plain}
\newtheorem{theorem}{Theorem}[section]
\newtheorem{lemma}[theorem]{Lemma}
\theoremstyle{definition}
\newtheorem{definition}[theorem]{Definition}
\theoremstyle{remark}
\icmltitlerunning{
Accelerating Legacy Numerical Solvers by Non-intrusive Gradient-based Meta-solving
}
\begin{document}

\twocolumn[
    \icmltitle{
        Accelerating Legacy Numerical Solvers by Non-intrusive Gradient-based Meta-solving
    }




    \begin{icmlauthorlist}
        \icmlauthor{Sohei Arisaka}{sch,comp}
        \icmlauthor{Qianxiao Li}{sch}
    \end{icmlauthorlist}

    \icmlaffiliation{sch}{Department of Mathematics, National University of Singapore}
    \icmlaffiliation{comp}{Kajima Corporation, Japan}

    \icmlcorrespondingauthor{Sohei Arisaka}{sohei.arisaka@u.nus.edu}

    \icmlkeywords{Meta-learning, Iterative methods, Machine learning, Numerical methods, Forward gradient}

    \vskip 0.3in
]



\printAffiliationsAndNotice{}  

\begin{abstract}
Scientific computing is an essential tool for scientific discovery and engineering design, 
and its computational cost is always a main concern in practice.
To accelerate scientific computing, 
it is a promising approach to use machine learning (especially meta-learning) techniques 
for selecting hyperparameters of traditional numerical methods.
There have been numerous proposals to this direction,
but many of them require automatic-differentiable numerical methods.    
However, in reality, many practical applications still depend on 
well-established but non-automatic-differentiable legacy codes, 
which prevents practitioners from applying the state-of-the-art research to their own problems.
To resolve this problem, we propose a non-intrusive methodology with a novel gradient estimation technique
to combine machine learning and legacy numerical codes without any modification.
We theoretically and numerically show the advantage of the proposed method over other baselines
and present applications of accelerating established non-automatic-differentiable numerical solvers implemented in PETSc,
a widely used open-source numerical software library.


\end{abstract}

\section{Introduction}
\label{sec:intro}

Scientific Machine Learning (SciML) is an emerging research area, 
where scientific data and advancing Machine Learning (ML) techniques are utilized 
for new data-driven scientific discovery \cite{Baker2019-vs,Takamoto2022-mr,Cuomo2022-me}.
One of the important topics in SciML is ML-enhanced simulation 
that leverages the strengths of both ML and traditional scientific computing,
and recently numerous methods have been proposed to this direction 
\cite{Karniadakis2021-tf,Thuerey2021-xr,Vinuesa2021-id, Willard2022-vs}.
These hybrid methods can be categorized into two groups 
according to how they are trained: separately or jointly.

In the first group, ML models are trained separately from numerical solvers and combined after training.
Typically, they are trained to predict the solution of the target problem, and the solver uses the prediction as an initial guess 
\cite{Ajuria_Illarramendi2020-tb, Ozbay2021-qt, Huang2020-oa, Vaupel2020-zu, Venkataraman2021-zv}.
However, as shown in \citet{Arisaka2023-aw}, using the prediction as an initial guess while ignoring the property of the solver 
can lead to worse performance than a traditional non-learning choice. 
In addition to learning initial guesses, ML models can be used 
to learn other solver parameters such as preconditioners.
To train them, one can use certain objectives as the loss fucntion that reflect the performance of the solver, 
such as the condition number of the preconditioned matrix \cite{Sappl2019-kq, Cali2023-qw} 
and the spectral radius of the iteration matrix \cite{Luz2020-ky}.
This approach is effective and efficient because the models can be trained without running solvers. 
However, it requires some domain knowledge to design effective objectives,
and derived methods are applicable to specific problems and solvers.

In the second group, to which this paper belongs, ML models are trained jointly with running numerical solvers.
This approach is more general and straightforward than the first group, 
because the ML model can learn from actual solver's behavior through their gradients 
that represent how ML model's output affects solver's output.
For example, in \citet{Um2020-zx}, 
authors propose a framework to train neural networks with differentiable numerical solvers 
and report a significant error reduction by correcting solutions using neural networks trained with PDE solvers.
In \citet{Hsieh2018-ey} and \citet{Chen2022-tx}, neural networks are trained to learn parameters of numerical solvers 
by minimizing the residual obtained by running the iterative solver with the learned parameters.
Furthermore, \citet{Arisaka2023-aw} proposed a general gradient-based learning framework to combine ML and numerical solvers,
which can be applied to a wide range of problems and solvers.

However, a main limitation of the second training approach is that 
it requires the gradients of the outputs of the numerical solvers with resepect 
to their parameters that ML models are learning to tune.
There are two difficulties in computing the gradients.
First, despite the recent progress in learning-enhanced simulations, 
numerical computations in many practical applications are still performed by non-automatic-differentiable programs,
including established open source libraries such as OpenFOAM \citep{Jasak2009-ao} and PETSc \citep{Balay2023-yy}, 
original legacy codes developed and maintained in each community, and commercial black-box softwares
such as ANSYS \citep{Stolarski2018-ly}.
Therefore, to compute their gradients, users need to modify existing codes to add the feature
or reimplement them in deep learning frameworks for enabling automatic differentiation.
It is an arduous task and is even impossible for black-box softwares.
Second, even for differentiable numerical methods implemented in deep learning frameworks, 
it can be difficult to compute the gradients using backpropagation. 
This is because the computation process of the numerical methods can be very long, 
and the computation graph for backpropagation can be too large to fit into memory.





To overcome this limitation, we propose a novel methodology, 
called non-intrusive gradient-based meta-solving (NI-GBMS), 
to accelerate legacy numerical solvers by combining them with ML without any modification (\cref{fig:wrapper}).
Our proposed method expands the gradient-based meta-solving (GBMS) algorithm in \citet{Arisaka2023-aw} 
to handle legacy non-automatic-differentiable codes by a novel gradient estimation technique 
using an adaptive surrogate model of the legacy codes.


Our main contributions can be summarized as follows:
\begin{compactenum}
    \item In \cref{sec:method}, we propose a non-intrusive methodology to train neural networks (meta-solvers)
    for accelerating legacy numerical solvers jointly with their non-automatic-differentiable black-box routines.
    To develop the methodology, we introduce an unbaised variance reduction technique for the forward gradient 
    using an adaptive surrogate model to construct the control variates.
    \item In \cref{sec:analysis}, we theoretically and numerically show that the training using the proposed method converges faster than 
    the training using the original forward gradient in a simple problem setting.
    \item In \cref{sec:app}, we demonstrate an application of the proposed method 
    to accelerate established non-automatic-differentiable numerical solvers inmplemented in PETSc, 
    resulting in a significant speedup.
    To the best of our knowledge, this is the first successful joint training of neural networks as meta-solvers and legacy solvers.
\end{compactenum}

\section{Related Work}
\label{sec:related}
\paragraph{ML-enhanced Simulation}
ML-enhanced simulation is identified as one of the priority research directions 
in SciML \cite{Baker2019-vs}.
Among a variety of approaches to this direction, we use a meta-learning approach to combine ML and numerical solvers.
In this approach, ML models learn how to solve numerical problems rather than the solution itself.
For instance, \citet{Guo2022-au} developed a learning-enhanced Runge-Kutta method for solving ordinary differential equations.
\citet{Chen2022-tx} used a neural network to generate smoothers of the Multi-grid Network depending on each problem instance.
In particular, this paper follows the meta-solving framework, 
which is proposed in \citet{Arisaka2023-aw} to combine ML and numerical methods, 
and expand it to handle non-automatic-differentiable black-box solver implementations.

\paragraph{Forward Gradient}
The forward gradient is a projection of the gradient on a radom perturbation vector, 
and it gives a low-cost unbiased estimator of the gradient \citep{Belouze2022-km}.
It is expected to be a possible alternative to the backpropagation algorithm,
but its high variance is considered as a main limitation of the method \citep{Baydin2022-xv}.
To reduce the variance, several approaches have been proposed. 
\citet{Ren2022-gc} showed that the activity-perturbed forward gradient has lower variance 
than the original weight-perturbed forward gradient.
\citet{Bacho2022-ru} and \citet{Fournier2023-rh} proposed variance reduction methods by trading off the unbiasedness.
In this paper, we propose a novel unbaiased variance reduction approach that is orthogonal to these works.

\paragraph{Black-box Functions and Neural Networks}
Our control variate approach is inspired by \citet{Grathwohl2017-bi}, 
where the authors construct a gradient estimator using the control variates 
for a specific class of black-box loss function written as $\cL(\vtheta) = \E_{p(b|\vtheta)}[f(b)]$.
It is not applicable to deterministic cases
because their method is based on the score-function estimator \citep{Williams1992-ll}.
On the other hand, our method is based on the forward gradient and applicable to any (mathematically) differentiable function.
For more general approach, \citet{Jacovi2019-pt} proposed the Estimate and Replace approach, 
where a neural network is trained by minimizing the difference between its output and the black-box function's output
and used to bypass the black-box function during the backward computation.
This method is simple but biased whereas ours is (asymptotically) unbiased
because we use it for improving the quality of the forward gradient of the black-box function 
rather than replacing the gradient.
We will show our advantage over this simple replacement approach in \cref{sec:analysis} and \cref{sec:app}.



\section{Method: Non-intrusive Gradient-based Meta-solving}
\label{sec:method}

\begin{figure*}[tb]
    \centering
    \includegraphics[width=\textwidth]{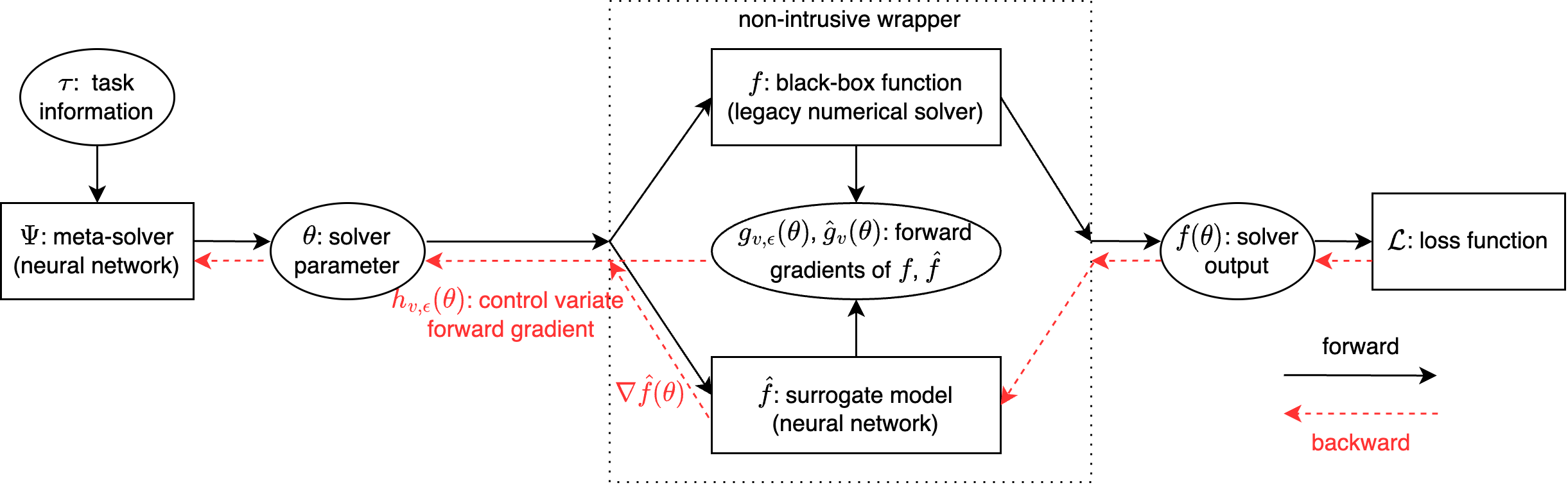}
    \caption{The overall architecture of non-intrusive gradient-based meta-solving}
    \label{fig:wrapper}
\end{figure*}

\begin{algorithm*}[tb]
    \caption{Non-intrusive gradient-based meta-solving}
    \label{alg:forward_gradient}
    \begin{algorithmic}
        \STATE {\bfseries Input:}
        $(P, \T)$: task space,
        $\Psi$: meta-solver with weight $\vomega$,
        $f$: legacy solver,
        $\hat f$: surrogate model with weight $\vphi$,
        $\mathcal L$: loss function,
        $\mathrm{Opt}$: optimizer for $\Psi$,
        $\hat{\mathrm{Opt}}$: optimizer for $\hat f$,
        $\hat P$: distribution of $\rvv$,
        $\epsilon$: positive scalar,
        $S$: stopping criterion

        \REPEAT

        \STATE $\rtau \sim P$, $\rvv \sim \hat P$
        \hfill \COMMENT {sample task $\tau$ and random vector $\rvv$ from $P$ and $\hat P$}
        
        \STATE \textbf{Forward computation:}
        \STATE \hspace{1em} $\vtheta \gets \Psi(\rtau; \vomega)$
        \hfill \COMMENT {generate solver parameter $\vtheta$ by $\Psi$}
        \STATE \hspace{1em} $y$ $\gets f(\vtheta)$, $y^+$ $\gets f(\vtheta + \epsilon \rvv)$
        \hfill \COMMENT {solve task $\tau$ twice by $f$ using parameter $\vtheta$ and $\vtheta + \epsilon \rvv$}
        \STATE \hspace{1em}  $d \gets \frac{y^+ - y}{\epsilon}$
        \hfill \COMMENT {compute the directional derivative of $f$ by finite difference approximation}
        \STATE \hspace{1em}  $\hat y, \hat d \gets \mathrm{ForwardAD}(\hat f, \vtheta, \rvv)$
        \hfill \COMMENT {compute the output and directional derivative of $\hat f$ by forward mode AD}
        \STATE \hspace{1em} $L \gets \cL (y)$, $\hat L \gets (d - \hat d)^2$
        \hfill \COMMENT {compute main loss $L$ and surrogate model loss $\hat L$}
        
        \STATE \textbf{Backward computation:}
        \STATE \hspace{1em} $\vh \gets d\rvv - \hat d \rvv + \nabla_\vtheta \hat f(\vtheta)$
        \hfill \COMMENT {compute control variate forward gradient}
        \STATE \hspace{1em} $\nabla_\vomega L \gets \mathrm{ModifiedBackprop}(L, \vomega, \vh)$
        \hfill \COMMENT {compute $\nabla_\vomega L$ by backpropagation while replacing $\nabla_\vtheta f(\vtheta)$ with $\vh$}
        \STATE \hspace{1em} $\nabla_\vphi \hat L \gets \mathrm{Backprop}(\hat L, \vphi)$
        \hfill \COMMENT {compute $\nabla_\vphi \hat L$ by backpropagation}

        \STATE \textbf{Update:}
        \STATE \hspace{1em} $\vomega \gets \mathrm{Opt}(\vomega, \nabla_\vomega L)$, $\vphi \gets \hat{\mathrm{Opt}}(\vphi, \nabla_\vphi \hat L)$
        \hfill \COMMENT {update $\vomega$ and $\vphi$ by $\mathrm{Opt}$ and $\hat{\mathrm{Opt}}$ using $\nabla_\vomega L$ and $\nabla_\vphi \hat L$, respectively}
        \UNTIL{$S$ is satisfied}
    \end{algorithmic}
\end{algorithm*}

Our non-intrusive framework can be viewed as a wrapper of a legacy numerical solver  
to be compatible with deep learning (\cref{fig:wrapper}).
To describe our method, we follow the meta-solving framework introduced in \cite{Arisaka2023-aw}.
Suppose that we have task space $(\T, P)$ and numerical solver $f$ with paramter $\vtheta$.
We want to train neural network $\Psi$ with weight $\vomega$, called meta-solver, 
to generate a good parameter $\vtheta$ of the solver $f$
for each task $\tau \in \T$.
The solver performance is measured by loss function $\cL$.
Then, the meta-solving problem is defined as follows:
\begin{definition}[Meta-solving problem \citep{Arisaka2023-aw}]
    For a given task space $(\T, P)$, loss function $\cL$, solver $f$, and meta-solver $\Psi$,
    find $\vomega$ which minimizes $\E_{\tau \sim P} \left[\cL(\tau, f(\tau; \Psi(\tau; \vomega))) \right]$.
\end{definition}
For instance, $\tau$ is to solve a linear system $\mA \vx = \vb $, $f$ is the Jacobi method starting from an initial guess $\vtheta = \vx_0$, 
$\Psi$ is a neural network to generate a good initial guess $\vtheta$ of the solver $f$ to solve $\tau$, 
and $\mathcal L$ is a surrogate of the number of iterations to converge.
To train $\Psi$, \citet{Arisaka2023-aw} proposed a gradient-based approach assuming that 
$\nabla f$ can be computed efficiently.
However, as mentioned in \cref{sec:intro}, the solver $f$ in practice can be a legacy code or commercial black-box software, 
which does not support automatic differentiation.

To compute the gradient of legacy numerical solvers without backpropagation, 
we utilize the forward gradient, a low-cost estimator of the gradient.
\begin{definition}[Forward gradient \citep{Belouze2022-km}]
    For a differentiable function $f:\R^d\to\R$ and a vector $\rvv \in \R^d$, 
    the forward gradient $\vg_\rvv:\R^d \to \R^d$ is defined as
    \begin{equation}
        \label{eq:finite difference}
        \vg_\rvv(\vtheta) = (\nabla f(\vtheta) \cdot \rvv )\rvv.
    \end{equation}
\end{definition}
In \cref{sec:method} and \cref{sec:analysis}, we describe and analyze our method using the definition for scalar functions, 
but it can be easily extended to vector-valued functions, which will be presented in \cref{sec:app}.
We also note that the original forward gradient is computed using forward mode automatic differentiation \cite{Baydin2018-gl},
but we use the following finite difference approximation for legacy numerical solvers:
\begin{equation}
    \label{eq:finite difference}
    \vg_{\rvv}(\vtheta) \approx \vg_{\rvv,\epsilon}(\vtheta) = \frac{f(\vtheta + \epsilon \rvv) - f(\vtheta)}{\epsilon} \rvv,
\end{equation}
where $\epsilon$ is a small positive scalar.

The advantage of the forward gradient is its unbiasedness and low computational cost in terms of both time and memory.
It is easily shown that 
if random vector $\rvv$'s scalar components $\rv_i$ are independent and have zero mean and unit variance for all $i$,
then the forward gradient $\vg_\rvv (\vtheta)$ is an unbiased estimator of $\nabla f(\vtheta)$,
i.e. $\E[\vg_\rvv(\vtheta)] = \nabla f(\vtheta)$.
As for the computational cost, the forward gradient of automatic-differentiable function $f$ such as neural networks 
can be computed in a single forward computation using forward mode automatic differentiation.
Even if a function $f$ is implemented in legacy codes without forward mode automatic differentiation,
the forward gradient can be computed using \cref{eq:finite difference} at the cost of evaluating $f$ twice.
Note that the computation cost is equivalent to the cost of backpropagation, but it can be computed in parallel.
Moreover, to compute the forward gradient, we do not need to store the intermediate values for backpropagation.

However, despite the above advantage, the practical use of the forward gradient is limited
because it suffers high variance, especially in high-dimensional cases.
\citet{Baydin2022-xv} shows that 
the best possible variance is achieved by choosing $\rv_i$'s to be independent Rademacher variables, 
i.e. $P(\rv_i = 1) =P(\rv_i = -1) = 1/2$ for all $i$,
in which case the variance is equal to 
\begin{equation}
    \label{eq:original variance} 
    \E[\|\vg_\rvv(\vtheta) - \nabla f(\vtheta)\|^2] = (d-1) \|\nabla f(\vtheta)\|^2,
\end{equation}
which is very large for large $d$.
This high variance in fact hinders the training process, 
and we will investigate the effect of the variance in \cref{sec:analysis} and \cref{sec:app}.

To alleviate this problem, we introduce a novel approach to reduce the variance while keeping it unbiased.
The key idea is to construct control variates using a surrogate model $\hat f$ 
whose gradient $\nabla \hat f$ is easy to compute.
\begin{definition}[Control variate forward gradient]
    For differentiable functions $f:\R^d\to\R$, $\hat f:\R^d \to \R$, and a random vector $\rvv \in \R^d$,
    the control variate forward gradient $\vh_\rvv:\R^d \to \R^d$ is defined as
    \begin{equation}
        \vh_\rvv(\vtheta) = \vg_\rvv(\vtheta) - \hat{\vg_\rvv}(\vtheta) + \E_\rvv[\hat{\vg_\rvv}(\vtheta)],
    \end{equation}
    where $\hat{\vg_\rvv}(\vtheta) = (\nabla \hat f(\vtheta) \cdot \rvv )\rvv$.
    Similarly, we define its finite difference approximation $\vh_{\rvv,\epsilon}$ using $\vg_{\rvv,\epsilon}$ as
    \begin{equation}
        \label{eq:finite difference control variate}
        \vh_{\rvv,\epsilon}(\vtheta) = \vg_{\rvv,\epsilon}(\vtheta) - \hat{\vg_\rvv}(\vtheta) + \E_\rvv[\hat{\vg_\rvv}(\vtheta)].
    \end{equation}    
\end{definition}
Note that we use the finite difference approximation only for $\vg_\rvv$ but not for $\hat{\vg_\rvv}$
because the surrogate model $\hat f$ will be implemented in deep learning frameworks with automatic differentiation.
The method of control variates is a classical and widely used variance reduction technique \cite{Nelson1990-qu}. 
In the control variates method, to reduce the variance of random variable $\rmX$ of interest,
we introduce another random variable $\rmY$ which has high correlation to $\rmX$ and known mean $\E[\rmY]$.
Then, $\rmX - \rmY + \E[\rmY]$ is used as a variance-reduced estimator of $\rmX$.
The key idea of our method is to use the forward gradient of the surrogate model $\hat \vg_\rvv$ as $\rmY$ 
so that $\E_\rvv[\hat{\vg_\rvv}(\vtheta)] = \nabla \hat f(\vtheta)$ can be computed by backpropagation.

The following theorem shows that the control variate forward gradient is still unbiased 
under the same assumption in \citet{Belouze2022-km}. 
Furthermore, if the surrogate model $\hat f$ is close to the numerical solver $f$, 
then the control variate forward gradient $\vh_\rvv$ successfully reduces the variance of the original forward gradient $\vg_\rvv$.
\begin{theorem}[Improvement by $\vh_\rvv$]
    \label{thm:property}
    If $\rv_i$'s are independent and have zero mean and unit variance,
    then $\vh_\rvv (\vtheta)$ is an unbiased estimator of $\nabla f(\vtheta)$, i.e.
    \begin{equation}
        \label{eq:expectation}
        \E[\vh_\rvv(\vtheta)] = \nabla f(\vtheta).
    \end{equation}
    Furtheremore, if $\rv_i$'s are independent Rademacher variables,
    then the mean squared deviation of $\vh_\rvv(\vtheta)$ is minimized and equal to
    \begin{equation}
        \label{eq:variance}
        \E[\|\vh_\rvv(\vtheta) - \nabla f(\vtheta)\|^2] = (d-1) \|\nabla f(\vtheta) - \nabla \hat f(\vtheta)\|^2.
    \end{equation}
\end{theorem}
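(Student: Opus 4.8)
The plan is to reduce both claims to the facts about the plain forward gradient already recorded in the excerpt, via one algebraic observation. Set $\vu = \nabla f(\vtheta) - \nabla \hat f(\vtheta)$. Since $\vg_\rvv(\vtheta) = (\nabla f(\vtheta)\cdot\rvv)\rvv$ and $\hat{\vg_\rvv}(\vtheta) = (\nabla \hat f(\vtheta)\cdot\rvv)\rvv$ are both linear in the gradient argument, their difference is $\vg_\rvv(\vtheta) - \hat{\vg_\rvv}(\vtheta) = (\vu\cdot\rvv)\rvv$; that is, it is exactly the plain forward gradient of a function whose gradient is $\vu$. Everything else follows by plugging this into the computations behind \eqref{eq:original variance}.

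For unbiasedness, I would first recall (or reprove componentwise, using $\E[\rv_i\rv_j] = \delta_{ij}$, which is precisely the zero-mean/unit-variance/independence assumption) that $\E[\vg_\rvv(\vtheta)] = \nabla f(\vtheta)$, and, applying the same fact to $\hat f$, that $\E_\rvv[\hat{\vg_\rvv}(\vtheta)] = \nabla \hat f(\vtheta)$. Then linearity of expectation gives $\E[\vh_\rvv(\vtheta)] = \nabla f(\vtheta) - \nabla\hat f(\vtheta) + \nabla\hat f(\vtheta) = \nabla f(\vtheta)$, which is \eqref{eq:expectation}.

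For the variance claim, the identity above yields $\vh_\rvv(\vtheta) - \nabla f(\vtheta) = (\vu\cdot\rvv)\rvv - \vu$, so $\E[\|\vh_\rvv(\vtheta) - \nabla f(\vtheta)\|^2]$ is literally the mean squared deviation of the plain forward gradient with $\nabla f$ replaced by $\vu$. Expanding the square and using $\E[(\vu\cdot\rvv)^2] = \|\vu\|^2$ (again from the moment assumptions), the deviation equals $\E[(\vu\cdot\rvv)^2\|\rvv\|^2] - \|\vu\|^2$. Writing $(\vu\cdot\rvv)^2\|\rvv\|^2 = \sum_{i,k,j} u_i u_k \rv_i\rv_k\rv_j^2$ and taking expectations — only the $i=k$ terms survive, since independence plus vanishing odd moments kills the rest — gives $\sum_i u_i^2 \E[\rv_i^4] + (d-1)\|\vu\|^2$. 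Hence minimizing the deviation over the admissible class (independent components, zero mean, unit variance) reduces to minimizing each $\E[\rv_i^4]$; since $\E[\rv_i^4] = \Var(\rv_i^2) + 1 \ge 1$ with equality iff $\rv_i^2 \equiv 1$ almost surely, the Rademacher choice is optimal and attains the value $(d-1)\|\vu\|^2 = (d-1)\|\nabla f(\vtheta) - \nabla\hat f(\vtheta)\|^2$, which is \eqref{eq:variance}. Taking $\hat f$ constant recovers \eqref{eq:original variance} as a special case.

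The routine part is the moment bookkeeping. The only step needing a genuine, if short, argument is the optimality of the Rademacher distribution among all independent, zero-mean, unit-variance perturbations, which is exactly where the fourth-moment inequality $\E[\rv_i^4]\ge(\E[\rv_i^2])^2$ enters; if one only wants the stated value rather than optimality, even that is unnecessary and the theorem is a two-line corollary of the plain forward-gradient computation applied to $\vu$.
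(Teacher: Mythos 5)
Your proof is correct and follows essentially the same route as the paper's: both arguments hinge on the observation that $\vg_\rvv - \hat{\vg_\rvv}$ is the plain forward gradient associated with $\vu = \nabla f - \nabla \hat f$, reduce everything to second- and fourth-moment bookkeeping for the $\rv_i$'s, arrive at the same intermediate expression $\sum_i u_i^2\,\Var[\rv_i^2] + (d-1)\|\vu\|^2$ (yours in the equivalent form $\sum_i u_i^2\,\E[\rv_i^4] + (d-2)\|\vu\|^2$), and conclude optimality of the Rademacher choice from $\E[\rv_i^4]\ge 1$. Your packaging --- subtracting $\nabla f$ first so that the deviation is literally the plain forward-gradient deviation applied to $\vu$ --- is a slightly tidier organization of the paper's componentwise expansion of $\E[{h_\rvv}_i^2]$, but the substance is identical.
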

The proof is found in \cref{sec:proofs}.
\cref{eq:variance} shows the improvement from \cref{eq:original variance} of the original forward gradient.
In \cref{eq:original variance}, there is no controllable term, 
whereas in \cref{eq:variance}, the variance can be reduced by choosing a good surrogate model $\hat f$.
We remark that the finite difference version $\vh_{\rvv,\epsilon}$ is not unbiased due to $\vg_{\rvv,\epsilon}$
but asymptotically unbiased as $\epsilon$ tends to $0$.
The error of the finite difference version is analyzed in \cref{app:finite difference}.

The remaining problem is how to obtain a surrogate model $\hat f$.
To this end, we propose a novel algorithm, called Non-Intrusive Gradient-Based Meta-Solving (NI-GBMS, \cref{alg:forward_gradient}).
Its main workflow is the same as the original gradient-based meta-solving \citep{Arisaka2023-aw},
but it is capable of handling legacy solver $f$ whose gradient is not accessible.
In NI-GBMS, we implement surrogate model $\hat f$ by a neural network with weight $\vphi$
and adaptively update it while training meta-solver $\Psi$.
For training $\hat f$, we use 
\begin{equation}
    \label{eq:loss}
    \hat \cL(\vphi) = \left(\nabla_\vtheta  f(\vtheta) \cdot \rvv  - \nabla_\vtheta \hat f(\vtheta; \vphi) \cdot \rvv \right)^2
\end{equation}
as a surrogate model loss.
This loss function is more suitable for our purpose 
than other losses that minimize the error of outputs $f(\vtheta)$ and $\hat f (\vtheta)$ \citep{Jacovi2019-pt}.
By contrast, our $\hat \cL$ is designed to minimize $\|\nabla f(\vtheta) - \nabla \hat f(\vtheta)\|^2 = \|\E [ {\vg_\rvv}(\vtheta)-\hat{\vg_\rvv}(\vtheta)]\|^2$, 
which determines the variance of the control variate forward gradient in \cref{eq:variance}.
Then, the control variate forward gradient is used as an approximation of $\nabla f$ 
to compute $\nabla_\vomega \cL$ for updating meta-solver $\Psi$.

We remark important characteristics of NI-GBMS.
The most significant one is its non-intrusiveness as the name suggests.
It enables us to integrate complex legacy codes and commercial black-box softwares into ML as they are.
In addition, the adaptive training strategy of the surrogate model $\hat f$ is another advantage.
By updating $\hat f$ during the training of meta-solver $\Psi$, 
it can automatically adapt to the changing distribution of $\vtheta$ generated by the meta-solver $\Psi$.
Finally, we note that the extra computational cost of training $\hat f$ is negligible 
compared to the cost of executing numerical solver $f$,
which is the bottleneck of our target problem setting.

\section{Analysis}
\label{sec:analysis}
To investigate the convergence property of NI-GBMS, we consider the simplest case, 
where a black-box solver $f$ itself is minimized over its parameter $\vtheta$, i.e. $\min_{\vtheta} f(\vtheta)$.
This is a special case of \cref{alg:forward_gradient}, 
where the meta-solver $\Psi$ is a constant function that returns its parameter $\vomega = \vtheta$,
and the loss function $\mathcal L$ is $f(\vtheta)$ itself.
This can also be considered that we directly optimize the solver parameter $\vtheta$ for a single given task $\tau$.
The optimization algorithm ${\mathrm{Opt}}$ for training $\Psi$ is a variant of the gradient descent, 
which uses the control variate forward gradient $\vh_\rvv(\vtheta)$ instead of the true gradient $\nabla f(\vtheta)$, i.e.
    $\vtheta_{k+1} = \vtheta_k - \alpha \vh_\rvv(\vtheta_k)$,
where $\alpha$ is the learning rate.

\subsection{Theoretical Analysis}
\label{sec:theory}
For the considered setting, we have the following convergence theorem.
Note that it is shown for the exact version of \cref{alg:forward_gradient}, 
where $\vh_\rvv$ is used instead of its finite difference approximation $\vh_{\rvv,\epsilon}$.
The expectation $\E[\cdot]$ in the theorem is taken over all randomness of $\rvv$ 
through the training process.

\begin{theorem}[Convergence]
    \label{thm:convergence}
    Let $f:\R^d \to \R$ be $\mu$-strongly convex and $L$-smooth.
    Denote the global minimizer of $f$ by $\vtheta_*$.
    Consider the sequence $(\vtheta_k)_{k \in \N}$ and $(\hat f_k)_{k \in \N}$ generated by NI-GBMS.
    \begin{enumerate} [label=(\alph*)]
        \item \label{thm:a}
              (\textbf{No surrogate model})
              Suppose that $\hat f_k = \hat f \equiv0$ for all $k \in \N$.
              If $\alpha \in (0, \frac{1}{Ld}]$,
              then the expected optimality gap satisfies the following inequality for all $k \in \N$:
              \begin{equation}
                    \label{eq:optimality gap}
                  \E[f(\vtheta_{k}) - f(\vtheta_*)] \leq (1 - \alpha \mu)^k (f(\vtheta_0) - f(\vtheta_*)).
              \end{equation}
        \item \label{thm:b}
              (\textbf{Fixed surrogate model})
              Instead of $\hat f\equiv0$ in \ref{thm:a}, suppose that $f_k = \hat f$ for all $k \in \N$ and $\hat f$ satisfies
              $\sup_{\vtheta\in\R^d}\{ \frac{\|\nabla f(\vtheta) - \nabla \hat f(\vtheta)\|^2}{\|\nabla f(\vtheta)\|^2}\} \leq r $ for some $r \in [0,1)$.
              Then, we have the same inequality as \cref{eq:optimality gap} 
              with an improved range of learning rate $\alpha \in (0, \min\{\frac{1}{\mu}, \frac{1}{Ldr}\}]$.
        \item \label{thm:c}
              (\textbf{Adaptive surrogate model})
              Suppose that we have
              \begin{equation}
                \label{eq:assumption}
                \E[ \|\nabla f(\vtheta_k) - \nabla \hat f_k(\vtheta_k)\|^2] \leq \xi^k \|\nabla f(\vtheta_0) - \nabla \hat f_0(\vtheta_0)\|^2
              \end{equation}
              for some $\xi \in (0,1)$.
              If $\alpha \in (0, \frac{1}{\mu}]$,
              then the expected optimality gap satisfies the following inequality for all $k \in \N$:
              \begin{equation}
                \label{eq:online}
                  \E[f(\vtheta_{k}) - f(\vtheta_*)] \leq C \rho^k,
              \end{equation}
              where
              \begin{equation}
                  C = \max\{\frac{\alpha L d \|\nabla f(\vtheta_0) - \nabla \hat f_0(\vtheta_0)\|^2}{\mu}, f(\vtheta_0) - f(\vtheta_*) \}
              \end{equation}
              and
              \begin{equation}
                  \rho = \max\{1-\alpha \mu, \xi \}.
              \end{equation}
    \end{enumerate}
\end{theorem}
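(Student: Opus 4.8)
The plan is to derive all three parts from a single one-step estimate that combines the $L$-smoothness descent lemma with the first- and second-moment formulas for the control variate forward gradient from \cref{thm:property}, and then to close a recursion using $\mu$-strong convexity. First I would write the descent lemma for one step $\vtheta_{k+1} = \vtheta_k - \alpha \vh_\rvv(\vtheta_k)$: by $L$-smoothness, $f(\vtheta_{k+1}) \le f(\vtheta_k) - \alpha \langle \nabla f(\vtheta_k), \vh_\rvv(\vtheta_k)\rangle + \tfrac{L\alpha^2}{2}\|\vh_\rvv(\vtheta_k)\|^2$. Conditioning on the history $\mathcal F_k$ up to step $k$ (so $\vtheta_k$ and $\hat f_k$ are frozen and $\rvv$ is the only randomness), \cref{thm:property} with Rademacher $\rv_i$ gives $\E[\vh_\rvv(\vtheta_k)\mid\mathcal F_k] = \nabla f(\vtheta_k)$ and, via the bias--variance decomposition together with \cref{eq:variance}, $\E[\|\vh_\rvv(\vtheta_k)\|^2\mid\mathcal F_k] = \|\nabla f(\vtheta_k)\|^2 + (d-1)\|\nabla f(\vtheta_k) - \nabla \hat f_k(\vtheta_k)\|^2$. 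Substituting yields the master inequality
\begin{align*}
\E[f(\vtheta_{k+1})\mid\mathcal F_k] &\le f(\vtheta_k) - \alpha\big(1 - \tfrac{L\alpha}{2}\big)\|\nabla f(\vtheta_k)\|^2 \\
&\quad + \tfrac{L\alpha^2(d-1)}{2}\,\|\nabla f(\vtheta_k) - \nabla \hat f_k(\vtheta_k)\|^2 ,
\end{align*}
which I would combine with the gradient-domination inequality $\|\nabla f(\vtheta)\|^2 \ge 2\mu(f(\vtheta) - f(\vtheta_*))$, a standard consequence of $\mu$-strong convexity.

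For part \ref{thm:a}, $\hat f_k \equiv 0$ makes the last term equal to $\|\nabla f(\vtheta_k)\|^2$, so the coefficient of $\|\nabla f(\vtheta_k)\|^2$ becomes $-\alpha(1 - \tfrac{Ld\alpha}{2})$, which is at most $-\tfrac{\alpha}{2}$ once $\alpha \le \tfrac{1}{Ld}$; gradient domination followed by taking total expectations gives $\E[f(\vtheta_{k+1}) - f(\vtheta_*)] \le (1-\alpha\mu)\,\E[f(\vtheta_k) - f(\vtheta_*)]$, and unrolling this recursion is exactly \cref{eq:optimality gap}. For part \ref{thm:b} I would instead bound the error term by $\|\nabla f(\vtheta_k) - \nabla \hat f(\vtheta_k)\|^2 \le r\|\nabla f(\vtheta_k)\|^2$, so the effective coefficient of $\|\nabla f(\vtheta_k)\|^2$ involves $1 + (d-1)r$ rather than the full $d$; the step size then only has to control this smaller quantity, which is the source of the enlarged admissible range, and the same gradient-domination-and-unrolling argument reproduces \cref{eq:optimality gap} unchanged. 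The bookkeeping here is to reconcile this coefficient condition with the stated interval $\alpha \in (0,\min\{1/\mu,\,1/(Ldr)\}]$ and to handle the limiting cases $r \to 0$ and $r \to 1$.

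Part \ref{thm:c} is where the real work lies, since the error term no longer vanishes and is only controlled in expectation by \cref{eq:assumption}. Taking total expectations in the master inequality, treating the $\|\nabla f(\vtheta_k)\|^2$ contributions with gradient domination and the error contribution with \cref{eq:assumption}, I expect a recursion of the form $\delta_{k+1} \le (1-\alpha\mu)\,\delta_k + \beta\,\xi^k$, where $\delta_k := \E[f(\vtheta_k) - f(\vtheta_*)]$ and $\beta$ is a constant of order $\alpha^2 Ld\,\|\nabla f(\vtheta_0) - \nabla \hat f_0(\vtheta_0)\|^2$. It then remains to show that such a recursion implies $\delta_k \le C\rho^k$ with $\rho = \max\{1-\alpha\mu,\,\xi\}$ and the stated $C$; I would do this by induction on $k$, using $\delta_0 \le C$ for the base case and, in the inductive step, bounding $\xi^k \le \rho^k$ and splitting on whether $\xi$ or $1-\alpha\mu$ dominates --- the choice $C = \max\{\alpha Ld\,\|\nabla f(\vtheta_0) - \nabla \hat f_0(\vtheta_0)\|^2/\mu,\ f(\vtheta_0) - f(\vtheta_*)\}$ is precisely what makes $(1-\alpha\mu)C + \beta \le \rho C$ go through, giving \cref{eq:online}. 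I expect the two fiddly points to be (i) the conditioning and measurability bookkeeping for the adaptively updated surrogate $\hat f_k$, and (ii) pinning down the step-size restriction in \ref{thm:c} so that the $\|\nabla f(\vtheta_k)\|^2$ coefficient in the master inequality stays benign while still permitting $\alpha$ up to $1/\mu$.
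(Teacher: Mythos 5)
Your route is structurally the same as the paper's: a one-step estimate from $L$-smoothness combined with the moments of $\vh_\rvv$ from \cref{thm:property}, then the Polyak--{\L}ojasiewicz inequality $\|\nabla f(\vtheta)\|^2 \ge 2\mu(f(\vtheta)-f(\vtheta_*))$ and an unrolling of the recursion; part (c) in the paper is closed by exactly the induction you sketch, with $C$ and $\rho$ playing the roles you assign them. The one substantive difference is the form of the one-step estimate. You keep the exact second moment $\E[\|\vh_\rvv(\vtheta_k)\|^2\mid\mathcal F_k]=\|\nabla f(\vtheta_k)\|^2+(d-1)\|\nabla f(\vtheta_k)-\nabla\hat f_k(\vtheta_k)\|^2$, so your master inequality carries the residual term $\tfrac{L\alpha^2}{2}\|\nabla f(\vtheta_k)\|^2$ alongside $\tfrac{L\alpha^2(d-1)}{2}\|\nabla f(\vtheta_k)-\nabla\hat f_k(\vtheta_k)\|^2$. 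The paper's \cref{lem:smooth} instead states the one-step bound with the single error term $\tfrac{\alpha^2 Ld}{2}\|\nabla f(\vtheta_k)-\nabla\hat f_k(\vtheta_k)\|^2$ and no residual $\|\nabla f\|^2$ contribution, i.e.\ it absorbs the $\|\nabla f\|^2$ part of the second moment into the deviation term. That absorption is exact when $\hat f\equiv 0$ (part (a), where the two versions coincide and your argument goes through verbatim), but for general $\hat f$ it is not what \cref{eq:variance} gives.

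This is precisely where the "bookkeeping" you flag becomes a genuine gap rather than bookkeeping. For part (b), your (correct) inequality yields the admissible range $\alpha\le\frac{1}{L(1+(d-1)r)}$, which is strictly smaller than $\frac{1}{Ldr}$ for every $r\in(0,1)$ and tends to $\frac{1}{L}$, not $\frac{1}{\mu}$, as $r\to 0$; the stated range $(0,\min\{\frac{1}{\mu},\frac{1}{Ldr}\}]$ therefore cannot be reconciled with your master inequality. For part (c), the surviving $+\tfrac{L\alpha^2}{2}\|\nabla f(\vtheta_k)\|^2$ term forces $\alpha\le\frac{1}{L}$ before PL can be applied, whereas the paper, working from its lemma, keeps the full $-\alpha\|\nabla f(\vtheta_k)\|^2$, obtains the factor $1-2\alpha\mu$, and thereby admits $\alpha$ up to $\frac{1}{\mu}$ (your recursion with contraction $1-\alpha\mu$ still closes, but only on the smaller range). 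So to reproduce the theorem as stated you would need to justify the paper's stronger one-step bound, which does not follow from the bias--variance decomposition you (correctly) use; otherwise you should state parts (b) and (c) with the more restrictive step sizes your estimate actually delivers, which still exhibit the same qualitative improvement from (a) to (c).
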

The proofs are presented in \cref{sec:proofs}.    
\cref{thm:convergence} compares our method to the original forward gradient and shows our advantage.
In \ref{thm:a}, $\hat f$ does nothing and the control variate forward gradient $\vh_\rvv$ reduces to the original forward gradient $\vg_\rvv$.
This case corresponds to the original forward gradient descent \citep{Belouze2022-km}.
The best convergence rate of \ref{thm:a} is $1 - \frac{\mu}{Ld}$, 
which becomes worse as the dimension $d$ increases due to the increasing variance of $\vg_\rvv$.
The case of \ref{thm:b} represents the situation of having a fixed surrogate model $\hat f$
and shows the improvement by the control forward gradient with the surrogate model $\hat f$.
In this case, the effective best convergence rate is $1 - \frac{\mu}{Ldr}$.
Although it still depends on the dimension $d$,
it is improved by the factor of the relative error $r$ between $\nabla f$ and $\nabla \hat f$.
Note that we have $ \frac{1}{Ld} \leq \min\{\frac{1}{\mu}, \frac{1}{Ldr}\}$ because $d \geq 1$, $r \in [0,1)$, and $\mu \leq L$ \cite{Bottou2016-zs}.
The case of \ref{thm:c} represents a more ideal situation where the gradient of the surrogate model $\nabla \hat f$ is 
converging to the gradient of the target function $\nabla f$ with convergence rate $\xi$. 
In this case, $f(\vtheta_k)$ converges to $f(\vtheta_*)$ with the convergence rate $\rho =  \xi$,
which can be independent of the dimension $d$.
In summary, the convergence rate of the proposed method can be improved 
as the surrogate model $\hat f$ becomes closer to the target black-box solver $f$,
and it can be independent of the dimension $d$ in the ideal case.



\subsection{Numerical Experiments}
\label{sec:numerical}

\begin{table*}[tb]
    \caption{Objective values obtained using different gradient estimators. 
    Boldface indicates the best performance except for the case of $\nabla f$.}
    \vskip -0.05in
    \label{tab:basic}
    \begin{subtable}[t]{0.45\textwidth}
    \centering
    \caption{Sphere function}
    \label{tab:sphere}
    \begin{small}
    \begin{tabular}{l|rrrr}
        \toprule
        gradient                        & $d=2$    & $d=8$    & $d=32$   & $d=128$  \\
        \midrule
          $\nabla f$                    & 1.56e-12 & 5.88e-12 & 2.37e-11 & 9.55e-11 \\
          $\nabla \hat f$               & 9.77e-01 & 1.41e-02 & 3.67e-07 & 2.89e-02     \\
          $\vg_{\rvv,\epsilon}$         & 1.61e-06 & 6.86e-03 & 5.88e-01  & 3.68e+01 \\
          $\vh_{\rvv,\epsilon}$ (ours)  & \textbf{7.33e-12} & \textbf{8.63e-11} & \textbf{2.36e-09} & \textbf{1.36e-07} \\
          \bottomrule
       \end{tabular}
\vskip -0.2in
\end{small}
    \end{subtable}
    \hfill
    \begin{subtable}[t]{0.45\textwidth}
    \centering
    \caption{Rosenbrock function}
    \label{tab:sphere}
    \begin{small}
    \begin{tabular}{l|rrrr}
        \toprule
        gradient                        & $d=2$    & $d=8$ & $d=32$   & $d=128$ \\
        \midrule
          $\nabla f$                    & 1.18e-05 & 5.98e-01 & 5.61e-01    & 6.06e-01   \\
          $\nabla \hat f$               & 2.23e+01 & \textbf{6.64e-04}  & 7.56e-01 & 1.91e+02     \\
          $\vg_{\rvv,\epsilon}$         & 9.06e-03 & 1.08e+00  & 1.40e+01     & 1.68e+02     \\
          $\vh_{\rvv,\epsilon}$ (ours)  & \textbf{6.73e-04} & 3.40e-01 & \textbf{4.11e-01}    & \textbf{7.86e-01}   \\
          \bottomrule
        \end{tabular}
    \end{small}
\vskip -0.2in
\end{subtable}
 \end{table*}

 \begin{figure*}[tb]
    \begin{subfigure}[b]{0.45\textwidth}
        \centering
       \includegraphics[width=\textwidth]{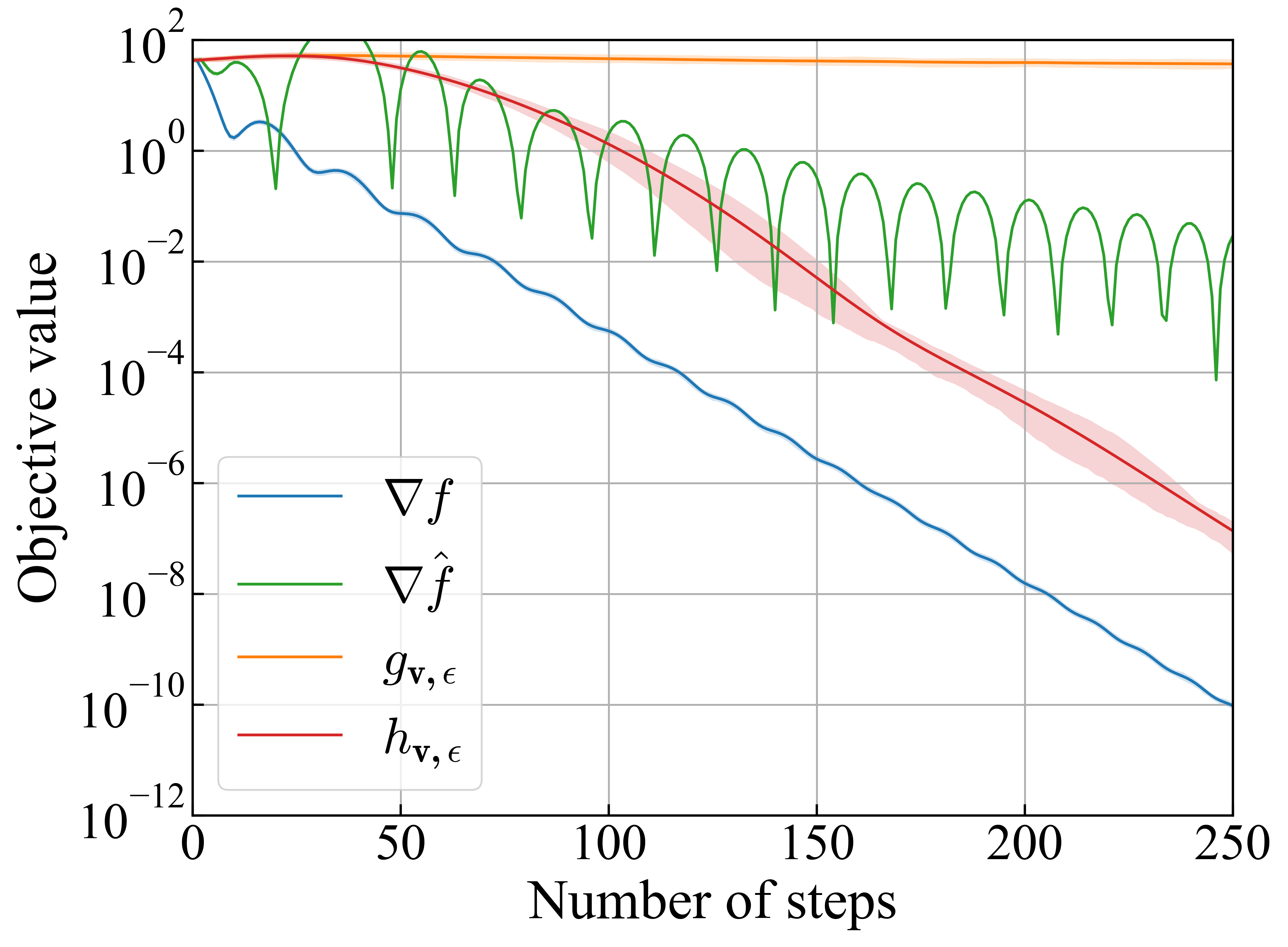}
       \caption{The convergence plot.}
       \label{fig:convergence}
    \end{subfigure}
    \hfill
    \begin{subfigure}[b]{0.45\textwidth}
        \centering
       \includegraphics[width=\textwidth]{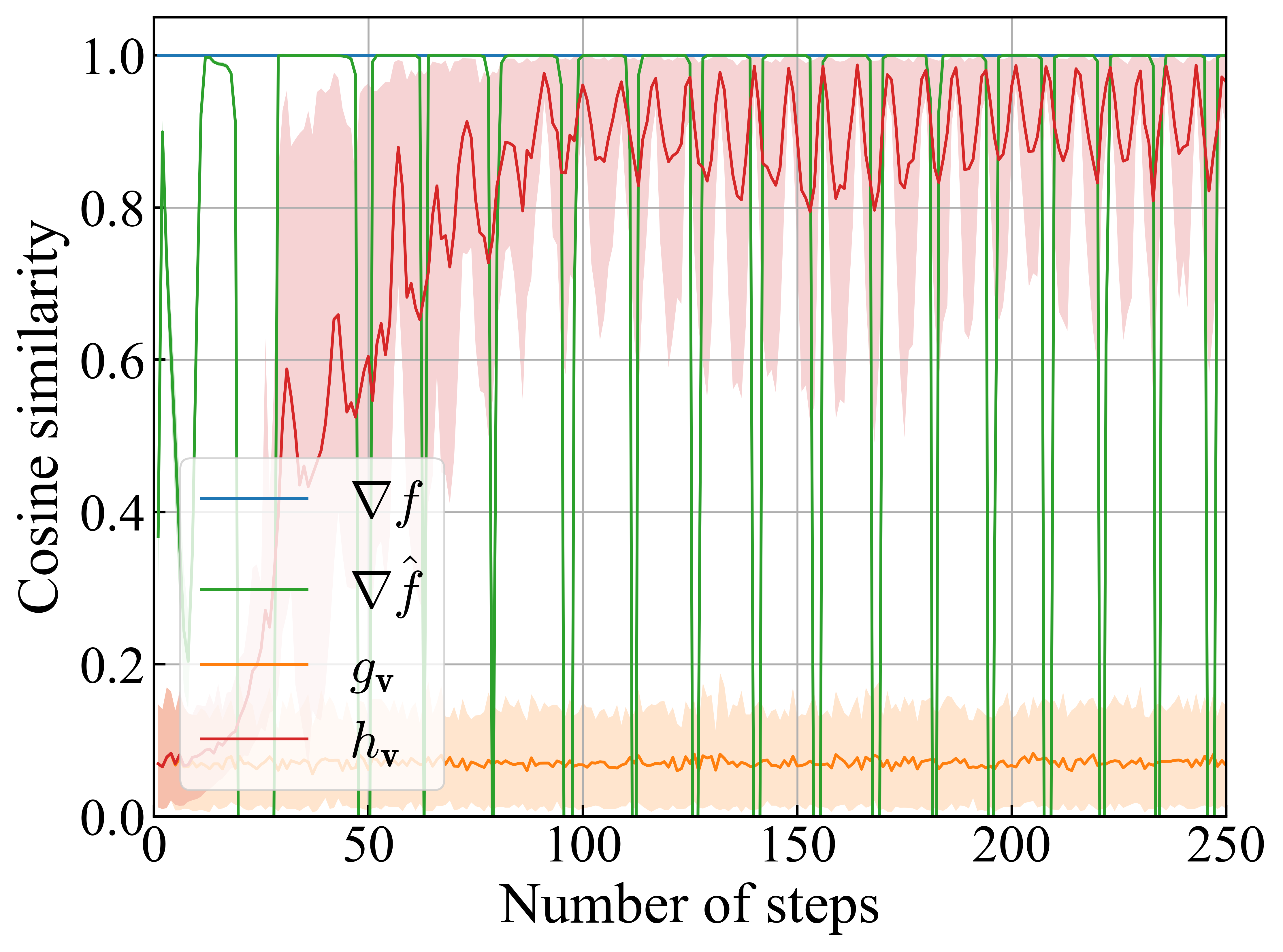}
    \caption{Cosine similarity to the true gradient $\nabla f$.}
    \label{fig:cosine}
    \end{subfigure}
    \vskip -0.1in
    \caption{The convergence plot and cosine similarity (Sphere function $d=128$).
    The shadowed area represents a range from 10\% to 90\%.}
    \label{fig:figure}
    \vskip -0.2in
\end{figure*}

Under the same problem setting, we conduct numerical experiments 
using two test functions, Sphere function \citep{Molga_undated-tx}:
\begin{equation}
    f(\vtheta)=\sum_{i=1}^{d} \theta_i^2
\end{equation}
and Rosenbrock function \citep{Rosenbrock1960-lg}:
\begin{equation}
    f(\vtheta)=\sum_{i=1}^{d-1}\left[100\left(\theta_{i+1}-\theta_{i}^2\right)^2+\left(\theta_{i}-1\right)^2\right]
\end{equation}
with the dimension $d=2, 8, 32, 128$.
For surrogate model $\hat f$, we use a convolutional neural network 
because the relation of $\vtheta$ coordinates is local.
To update the surrogate model $\hat f$, the Adam optimizer \cite{Kingma2015-ys} with learning rate 0.01 is used as $\hat {\mathrm{Opt}}$.
Note that the surrogate model $\hat f$ is trained online during minimizing $f$ without pre-training.
The main optimizer ${\mathrm{Opt}}$ is also the Adam optimizer with $\alpha=0.1$ for Sphere function 
and $\alpha=0.01$ for the Rosenbrock function.
The number of optimization steps is 250 for Sphere function and 50,000 for Rosenbrock function.
For comparison, we also test using the true gradient $\nabla f$, 
the gradient of the surrogate model $\nabla \hat f$ \citep{Jacovi2019-pt}, 
and the forward gradient $\vg_{\rvv,\epsilon}$ \citep{Belouze2022-km} instead of our control variate forward gradient $\vh_{\rvv,\epsilon}$.
We set finite difference step size $\epsilon=10^{-8}$.
The initial parameter $\vtheta_0$ is sampled from the uniform distribution on $[-1,1]^d$.
The random vector $\rvv$ is sampled from the Rademacher distribution.
We optimize 100 samples and report the average of their final objective values in \cref{tab:basic}.
Other details are presented in \cref{app:details}.



\cref{tab:basic} shows the advantage of our method.
For all cases including high dimensional cases, 
our control variate forward gradient $\vh_{\rvv, \epsilon}$ successfully minimize the objective function $f$ 
to comparable values with the best achievable baseline, the true gradient $\nabla f$.
By contrast, the performance of the original forward gradient $\vg_{\rvv, \epsilon}$ degrades quickly as the dimension increases.
The performance of the gradient of the surrogate model $\nabla \hat f$ is not stable.
Although it can perform better than the true gradient $\nabla f$ (e.g Rosenbrock function with $d=8$) thanks to its bias,
its performance is worse than our $\vh_{\rvv, \epsilon}$ for most cases.
\cref{fig:figure} illustrates the typical behavior of each gradient estimator.
In the illustrated case, the original forward gradient $\vg_{\rvv, \epsilon}$ fails to minimize the objective function $f$ because of its high variance,
which is also shown in a low cosine similarity to the true gradient $\nabla f$ in \cref{fig:cosine}.
The gradient of the surrogate model $\nabla \hat f$ shows unstable convergence behavior in \cref{fig:convergence} 
and unstable cosine similarity in \cref{fig:cosine}.
Our control variate forward gradient $\vh_{\rvv, \epsilon}$ resolves these undesired behaviors 
and shows stable convergence and high cosine similarity.
Although it takes some time for the surrogate model $\hat f$ to fit $f$, 
the convergence speed after fitting is comparable to the true gradient $\nabla f$.



\section{Application: Learning Parameters of Legacy Numerical Solvers}
\label{sec:app}



In this section, we present applications of NI-GBMS to learn good initial guesses of iterative solvers implemented in PETSc, 
the Portable, Extensible Toolkit for Scientific Computation \citep{Balay2023-yy}, 
which is widely used for large-scale scientific applications but does not support automatic differentiation.


\subsection{Toy Example: Poisson Equation}
\label{sec:poisson}

\paragraph{Problem setting}
Let us consider solving 1D Poisson equations with different source terms repeatedly.
Our task $\tau$ is to solve the 1D Poisson equation with the homogeneous Dirichlet boundary condition:
\begin{align}
    - \frac{d^2}{dz^2} u(z)  &=b_\tau(z), \quad z \in (0, 1)\\
      u(0) &= u(1)=0.    
\end{align} 
Using the finite difference scheme, 
we discretize it to obtain the linear system $\mA \vx=\vb_\tau$,
where $\mA \in \R^{N\times N}$ and $\vx,\vb_\tau \in \R^N$. 
Thus, our task $\tau$ is represented by $\tau = \{\vb_\tau\}$.
We use two task distributions $P$ and $Q$ described in \cref{app:details}.
The main loss function is a surrogate loss for the number of iterations $\tilde \cL_\delta$ presented in \citep{Arisaka2023-aw},
which is defined recursively as:
\begin{align}
    \tilde \cL_\delta^{(k+1)}(\tau; \vomega) &= \tilde \cL_\delta^{(k)}(\tau; \vomega) + \sigma(\cL_k(\tau; \vomega) - \epsilon), \\
    \tilde \cL_\delta^{(0)}(\tau; \vomega) &= 0,
\end{align}
where $\delta$ is a given target tolerance, $\sigma$ is the sigmoid function, and $\cL_k$ is the $k$-th step loss.
For $\cL_k$, we use the relative residual $\norm{\vb_\tau - \mA \vx_k}/\norm{\vb_\tau}$, where $\vx_k$ is the solution after $k$ iterations,
so the training does not require any pre-computed solutions and is conducted in a self-supervised manner.
The solver $f$ is a legacy iterative solver with an initial guess $\vtheta$ for the Poisson equation.
We use the Jacobi method $f_{\textrm{Jac}}$ and the geometric multigrid method $f_{\textrm{MG}}$ \cite{Saad2003-vm} implemented in PETSc,
and their tolerance $\delta$ is set to $10^{-3}$ and $10^{-8}$ respectively.
The meta-solver $\Psi$ parameterized by $\vomega$ generates the initial guess $\vtheta_\tau$ for each task $\tau$.
We use a fully-connected neural network $\Psi_{\textrm{GBMS}}$ for $\Psi$.
We implement surrogate model $\hat f$ by a fully-connected neural network.
Then, we train $\Psi_{\textrm{GBMS}}$ to minimize $\tilde \cL_\delta$ using \cref{alg:forward_gradient}.
Other details, including network architecture, the parameters of PETSc solvers, and training hyperparameters, are presented in \cref{app:details}.

\paragraph{Baselines}
For the purpose of comparison, we have five baselines: $\Psi_0$, $\Psi_\mathrm{SL}$, 
$\Psi_{\textrm{GBMS}}$ trained with $\nabla f$,
$\Psi_{\textrm{GBMS}}$ trained with $\nabla \hat f$,
and $\Psi_{\textrm{GBMS}}$ trained with $\vg_{\rvv,\epsilon}$.
$\Psi_0$ is a non-learning baseline that always gives initial guess $\vtheta = \bf{0}$,
which is a reasonable choice because $u_\tau(0)=u_\tau(1)=0$ and $\E_{\tau \sim P}[u_\tau] = \mathbf 0$.
$\Psi_\mathrm{SL}$ is an ordinary supervised learning baseline 
whose network architecture is the same as $\Psi_{\mathrm{NN}}$.
It is trained independently of the solver $f$ 
by minimizing the relative error $\norm{\hat \vx - \vx_{\tau*}}/\norm{\vx_{\tau*}}$, 
where $\hat \vx$ is the prediction of $\Psi$ and $\vx_{\tau*}$ is the pre-computed reference solution of task $\tau$.
Note that $\Psi_\mathrm{SL}$ does not require the gradient of the solver $f$.
$\Psi_{\textrm{GBMS}}$ trained with $\nabla f$ is considered as the best achievable baseline,
which is available when the solver $f$ is implemented in deep learning frameworks.
$\Psi_{\textrm{GBMS}}$ trained with $\nabla \hat f$ is a baseline trained using the gradient of the surrogate model $\nabla \hat f$,
which can be computed easily by backpropagation but is biased.
This corresponds to the training approach proposed in \citet{Jacovi2019-pt}.
$\Psi_{\textrm{GBMS}}$ trained with $\vg_{\rvv,\epsilon}$ is a baseline trained with the forward gradient $\vg_{\rvv,\epsilon}$,
which is applicable to legacy solvers but suffers high variance.
We also check the difference between the exact forward gradients ($\vg_\rvv, \vh_\rvv$) 
and the finite difference approximation ($\vg_{\rvv,\epsilon}, \vh_{\rvv,\epsilon}$).
For training the baselines requiring automatic differentiation, 
we implement the Jacobi method and the geometric multigrid method using PyTorch 
so that they are differentiable and equivalent to the solvers in PETSc.


\paragraph{Results}
\begin{table}[tb]
    \caption{The average number of iterations to converge. Boldface indicates the best performance except for the case of $\nabla f$. }
    \label{tab:number of iterations}
    \centering
    \begin{small}
    \begin{tabular}{ll|rr}
        \toprule
        $\Psi$               & gradient                 & $f = f_{\mathrm{Jac}}$ & $f = f_{\mathrm{MG}}$ \\
        \midrule
        $\Psi_0$             & -                                &  173.00           &  90.06    \\
        $\Psi_\mathrm{SL}$   & -                                &  175.71           &    78.62  \\
        $\Psi_{\mathrm{GBMS}}$ & $\nabla f$                     &  53.64            &  39.29        \\
                             & $\nabla \hat f$                  &  106.65           &     52.59     \\
                             & $\vg_\rvv$                       &   66.76           &     59.06    \\
                             & $\vg_{\rvv,\epsilon}$            &   66.85           &    59.09  \\
                             & $\vh_\rvv$                       &  \textbf{44.92}   &    \textbf{42.76}     \\
                             & $\vh_{\rvv,\epsilon}$ (ours)     &   45.15           &    43.06  \\
        \bottomrule
    \end{tabular}
    \end{small}
\end{table}
\begin{figure}[tb]
        \centering
       \includegraphics[width=0.45\textwidth]{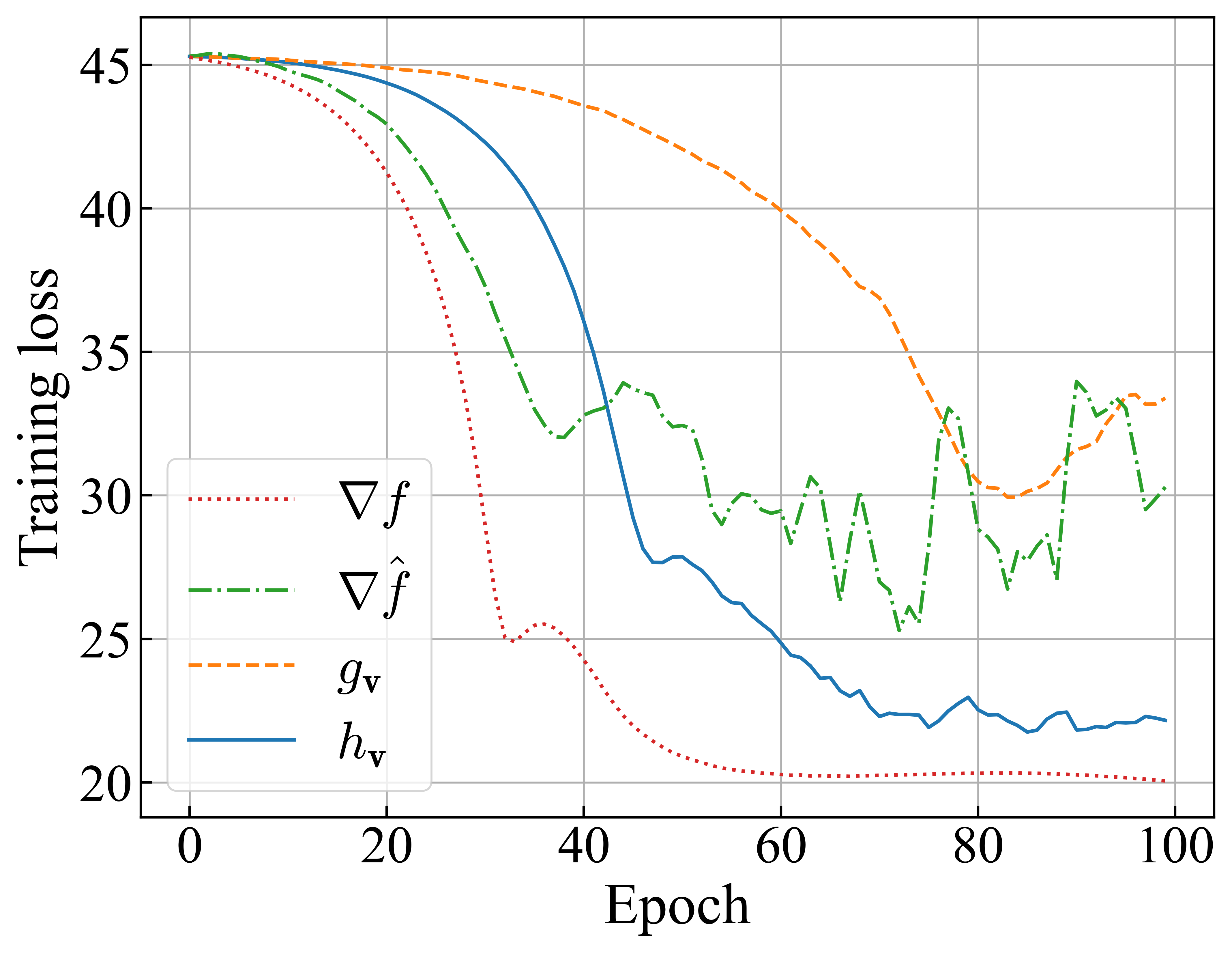}
    \caption{The learning curves of the case $f_{\mathrm{MG}}$. 
    They are trained using a same random seed and hyperparameters except for the gradient estimator.}
    \label{fig:learning curve}
\end{figure}

\cref{tab:number of iterations} presents the average number of iterations to converge 
starting from the initial guesses generated by each trained meta-solver.
For both Jacobi and multigrid solvers, the proposed method achieves comparable results to 
the best achievable baseline ($\Psi_{\textrm{GBMS}}$ trained with $\nabla f$)
and outperforms other baselines by a large margin.
The gap between $\Psi_\mathrm{SL}$ and $\Psi_{\textrm{GBMS}}$ trained with $\nabla f$
shows the importance of incorporating the solver information into the training of $\Psi$
and necessity of the proposed method for legacy solvers.
\cref{tab:number of iterations} also shows that 
there is no significant difference between the exact forward gradients and the finite difference approximation.
\cref{fig:learning curve} shows the learning curves of the case of the multigrid method.
Their behaviors are similar to the ones shown in \cref{fig:convergence}:
$\vg_{\rvv,\epsilon}$ suffers slow training, $\nabla \hat f$ is unstable, 
and $\vh_{\rvv,\epsilon}$ converges similarly to $\nabla f$ after some delay for fitting $\hat f$.
In summary, $\Psi_{\textrm{GBMS}}$ is successfully trained jointly with non-automatic-differentiable solvers using the proposed method,
and reduces the number of iterations from the solver-independent supervised learning baseline $\Psi_{\mathrm{SL}}$ by 74\% for the Jacobi method
and 45\% for the multigrid method.


\subsection{Advanced Examples}
\label{sec:advanced}
To demonstrate the versatility and performance of NI-GBMS, 
we also consider more advanced examples, 2D biharmonic equation and 3D linear elasticity equations, using FEniCS \citep{Alnaes2015-nz}.
In these examples, we accelerate the algebraic multigrid method $f_{\mathrm{AMG}}$ in PETSc, 
which is one the most practical linear solvers \citep{Stuben2001-sj}, by learning initial guesses using NI-GBMS.
Details are presented in \cref{app:details}.

\paragraph{Biharmonic Equation}
The biharmonic equation is a fourth-order elliptic equation of the form:
    $\nabla^4 u = b_\tau$ in $\Omega$,
where $\nabla^4$ is the biharmonic operator, $\Omega$ is a domain of interest.
Let $\Omega$ be $[0, 1]^2$ and boundary conditions be
$u = \nabla^2 u = 0$ on $\partial \Omega$.
The source term $b_\tau$ is sampled from a task distribution $P$ presented in \cref{app:details}.
The equation is discretized using finite elements, 
and the discretized linear system is solved by the algebraic multigrid method $f_{\mathrm{AMG}}$ in PETSc.

\paragraph{Linear Elasticity Equations}
The linear elasticity problem is described by the following equations:
\begin{align}
    -\nabla \cdot \sigma(u) & =b_\tau \quad \text { in } \Omega,                                     \\
    \sigma(u)               & =\lambda \operatorname{tr}(\epsilon(u)) I+2 \mu \epsilon(u) \\
    \epsilon(u)             & =\frac{1}{2}\left(\nabla u+(\nabla u)^T\right)
\end{align}
where $\Omega = [0, 1]^3$ is an elastic body, $u$ is the displacement field, $\sigma$ is the stress tensor, $\epsilon$ is strain-rate tensor,
$\lambda$ and $\mu$ are the Lam\'{e} elasticity parameters, and $b_\tau$ is the body force.
We consider clamped bodies deformed by their own weight by letting $u = (0, 0, 0)^T$ at $x=0$ and $b_\tau = (0, 0, -\rho_\tau g)$,
where $\rho_\tau$ is the density and $g$ is the gravitational acceleration.
The density $\rho_\tau$ is sampled from the log uniform distribution on $[10^{-2}, 10^{2}]$, which determines our task space.
Then, the equations are discretized using finite elements and solved by $f_{\mathrm{AMG}}$.

\paragraph{Results}
The performance improvement by NI-GBMS is shown in \cref{tab:advanced},
where the average number of iterations is reduced from the default initial guess, which is the zero vector, 
by 57\% for the biharmonic equation and 71\% for the linear elasticity equations.
This result demonstrates the versatility and performance of NI-GBMS for more advanced problem settings.

\begin{table}[tb]
    \caption{The average number of iterations to reach relative residual tolerance $\delta = 10^{-5}$.}
    \label{tab:advanced}
    
    \centering
    \begin{small}
    \begin{tabular}{l|rr}
        \toprule
                                 & Biharmonic Eq.               & Elasticity Eq.  \\
        \midrule
        Default PETSc            & 73.74                        &  88.71    \\
        Enhanced by NI-GBMS      & \textbf{31.54}               &   \textbf{25.61} \\
        \bottomrule
    \end{tabular}
    \end{small}
\end{table}

\section{Conclusion}
\label{sec:conclusion}
In this paper, we proposed NI-GBMS, a novel methodology to combine meta-solvers parametrized by neural networks 
and non-automatic-differentiable legacy solvers without any modification.
To develop this, we introduced the control variate forward gradient,
which is unbiased and has lower variance than the original forward gradient.
Furthermore, we proposed a practical algorithm to construct the control variate forward gradient using an adaptive surrogate model.
It was theoretically and numerically shown that the proposed method has better convergence property than other baselines.
We applied NI-GBMS to learn initial guesses of established iterative solvers in PETSc,
resulting in a significant reduction of the number of iterations to converge.
Our proposed method expands the range of applications of neural networks, 
and it can be a fundamental building block to combine modern neural networks and legacy numerical solvers.

In future work, we will explore more sophisticated architectures of the surrogate model, 
which can leverage our prior knowledge of the solver and problem and lead to better performance.
For example, one can build surrogate model $\hat f = \hat f_2 \circ \hat f_1$ using two building blocks $\hat f_1$ and $\hat f_2$, 
where $\hat f_1$ is the same algorithm as legacy solver $f$ but implemented in a deep learning framework and working on a smaller problem size,
and $\hat f_2$ is a neural network to correct the difference between $f$ and $\hat f_1$.
Beyond scientific computing applications, our proposed method can be useful to the problems of learning to optimize \cite{Chen2021-zi},
where memory limitation often becomes a practical bottleneck because the computation graph tends to be too large for backpropagation.
We expect that this limitation can be overcome by the proposed method using the high-quality gradient estimation without storing intermediate values.

\section*{Impact Statements}
This paper presents work whose goal is to advance the field of Machine Learning. 
There are many potential societal consequences of our work, 
none which we feel must be specifically highlighted here.

\section*{Acknowledgements}
We would like to thank the anonymous reviewers for their constructive comments.
S. Arisaka is supported by Kajima Corporation, Japan.
Q. Li is supported by the National Research Foundation, Singapore, under the NRF fellowship (NRF-NRFF13-2021-0005).

\newpage

\bibliography{example_paper}
\bibliographystyle{icml2024}

\newpage
\appendix
\onecolumn
\section{Appendix}
\subsection{implementation}
The source code of the experiments is available at \url{https://github.com/icml2024paper/NI-GBMS}.

\subsection{Proofs}
\label{sec:proofs}

\begin{proof}[Proof of \cref{thm:property}]
    To lighten the notation, we omit $\vtheta$ dependence.
    Denoting $i$th component of $\vg_\rvv$ by ${g_{\rvv}}_i$,
    \begin{align}
        \E [{{g_{\rvv}}_i}^2]
         & = \E \left[(\nabla f \cdot \rvv)^2 \rv_i^2 \right]                                                                              \\
         & = \E \left[ \left(\pder[f]{\theta_i}\right)^2 {\rv_i}^4 + \sum_{j\neq i}\left(\pder[f]{\theta_j}\right)^2 {\rv_i}^2 {\rv_j}^2
        + 2 \sum_{k < l} \pder[f]{\theta_k}\pder[f]{\theta_l}{\rv_i}^2\rv_k\rv_l \right]                                                   \\
         & = \left(\pder[f]{\theta_i}\right)^2 \E[ {\rv_i}^4] + \sum_{j\neq i}\left(\pder[f]{\theta_j}\right)^2 \E[{\rv_i}^2]\E[{\rv_j}^2]
        + 2 \sum_{k < l} \pder[f]{\theta_k}\pder[f]{\theta_l} \E[{\rv_i}^2]\E[\rv_k]\E[\rv_l]                                              \\
         & = \left(\pder[f]{\theta_i}\right)^2 (1 + \Var[{\rv_i}^2]) + \sum_{j\neq i}\left(\pder[f]{\theta_j}\right)^2                     \\
         & = \left(\pder[f]{\theta_i}\right)^2 \Var[{\rv_i}^2] + \| \nabla f\|^2.
    \end{align}
    Hence,
    \begin{align}
        \E [{{h_{\rvv}}_i}^2]
         & = \E\left[ \left( {g_{\rvv}}_i - \hat{g_\rvv}_i + \E[\hat{g_\rvv}_i]\right)^2\right]                                    \\
         & = \E\left[ \left((\nabla f \cdot \rvv)\rv_i - (\nabla \hat f \cdot \rvv)\rv_i + \pder[\hat f]{\theta_i}\right)^2\right] \\
         & = \E\left[ \left((\nabla f - \nabla \hat f) \cdot \rvv)\rv_i  + \pder[\hat f]{\theta_i}\right)^2\right]                 \\
         & = \E\left[ \left((\nabla f - \nabla \hat f) \cdot \rvv\right)^2 \rv_i^2
            + 2 \left((\nabla f - \nabla \hat f) \cdot \rvv\right)\rv_i \pder[\hat f]{\theta_i}
        + \left(\pder[\hat f]{\theta_i}\right)^2 \right]                                                                           \\
         & = \E\left[ \left((\nabla f - \nabla \hat f) \cdot \rvv\right)^2 \rv_i^2\right]
        + 2 \E \left[((\nabla f - \nabla \hat f) \cdot \rvv)\rv_i\right] \pder[\hat f]{\theta_i}
        + \left(\pder[\hat f]{\theta_i}\right)^2                                                                                   \\
         & = \left(\pder[f]{\theta_i} - \pder[\hat f]{\theta_i}\right)^2 \Var[{\rv_i}^2] + \|\nabla f - \nabla \hat f\|^2
        + 2 \left(\pder[f]{\theta_i} - \pder[\hat f]{\theta_i}\right)\pder[\hat f]{\theta_i}
        + \left(\pder[\hat f]{\theta_i}\right)^2                                                                                   \\
         & = \left(\pder[f]{\theta_i} - \pder[\hat f]{\theta_i}\right)^2 \Var[{\rv_i}^2] + \|\nabla f - \nabla \hat f\|^2
        + 2\pder[f]{\theta_i}\pder[\hat f]{\theta_i} - \left(\pder[\hat f]{\theta_i}\right)^2.
    \end{align}
    Therefore,
    \begin{align}
          & \E[\|\vh_\rvv - \nabla f\|^2]                                                            \\
        = & \E[\|\vh_\rvv\|^2] - \|\nabla f\|^2                                                      \\
        = & \sum_{i=1}^d \E [{{h_{\rvv}}_i}^2] - \|\nabla f\|^2                                      \\
        = & \sum_{i=1}^d \left(\pder[f]{\theta_i} - \pder[\hat f]{\theta_i}\right)^2 \Var[{\rv_i}^2]
        + d\|\nabla f - \nabla \hat f\|^2
        + 2 \nabla f \cdot \nabla \hat f
        - \|\nabla \hat f\|^2
        - \|\nabla f\|^2                                                                             \\
        = & \sum_{i=1}^d \left(\pder[f]{\theta_i} - \pder[\hat f]{\theta_i}\right)^2 \Var[{\rv_i}^2]
        + (d-1)\|\nabla f - \nabla \hat f\|^2
    \end{align}
    This is minimized when $\Var[{\rv_i}^2]=0$ for all $i$, which implies $\rv_i$'s are independent Rademacher variables.
    Then, the minimized deviation is
    \begin{equation}
        \E[\|\vh_\rvv - \nabla f\|^2] = (d-1)\|\nabla f - \nabla \hat f\|^2.
    \end{equation}
\end{proof}

\begin{proof}[Proof of \cref{thm:convergence}]
    To prove \cref{thm:convergence}, we introduce \cref{lem:smooth} and \cref{lem:convex}.

    \begin{lemma}
        \label{lem:smooth}
        If $f:\R^d\to\R$ is $L$-smooth, then for all $k \in \N$, we have
        \begin{equation}
            \label{eq:smooth}
            \E[f(\vtheta_{k+1})] - f(\vtheta_k)
            \leq -\alpha \|\nabla f(\vtheta_k)\|^2 + \frac{\alpha^2 Ld}{2} \|\nabla f(\vtheta_k) - \nabla \hat f_k(\vtheta_k) \|^2.
        \end{equation}
    \end{lemma}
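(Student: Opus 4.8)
The plan is to run the textbook one-step descent argument for smooth stochastic gradient descent, with the control-variate forward gradient $\vh_\rvv(\vtheta_k)$ playing the role of the stochastic gradient and \cref{thm:property} supplying the needed first and second moments. Concretely, I would start from the $L$-smoothness quadratic upper bound evaluated along the iterate $\vtheta_{k+1} = \vtheta_k - \alpha \vh_\rvv(\vtheta_k)$:
\[
  f(\vtheta_{k+1}) \le f(\vtheta_k) - \alpha\,\langle \nabla f(\vtheta_k), \vh_\rvv(\vtheta_k)\rangle + \frac{\alpha^2 L}{2}\,\|\vh_\rvv(\vtheta_k)\|^2 .
\]
Then I would take the expectation over the perturbation $\rvv$ drawn at step $k$, conditioned on the history of perturbations used at steps $0,\dots,k-1$. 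The point to get right here is measurability: $\vtheta_k$ and the surrogate $\hat f_k$ are functions of the earlier perturbations only, so conditionally they are fixed, while the step-$k$ perturbation is a fresh Rademacher vector; hence \cref{thm:property} applies conditionally with $\hat f$ instantiated as $\hat f_k$.

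After this conditional expectation, the inner-product term collapses to $-\alpha\|\nabla f(\vtheta_k)\|^2$ by the unbiasedness in \cref{eq:expectation}. For the quadratic term I would use the bias--variance identity $\E\|\vh_\rvv(\vtheta_k)\|^2 = \|\nabla f(\vtheta_k)\|^2 + \E\|\vh_\rvv(\vtheta_k) - \nabla f(\vtheta_k)\|^2$ and then plug in the mean-squared deviation from \cref{eq:variance}, namely $(d-1)\|\nabla f(\vtheta_k) - \nabla \hat f_k(\vtheta_k)\|^2$. Substituting, rearranging, and using $d - 1 \le d$ gives the stated one-step estimate; finally the tower property, averaging over the earlier perturbations, removes the conditioning and yields \cref{eq:smooth}.

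The main obstacle I anticipate is not the algebra but the probabilistic bookkeeping: being careful that $\hat f_k$ is independent of the step-$k$ perturbation so that the moment formulas of \cref{thm:property} are legitimately applicable conditionally, and then tracking the $\|\nabla f(\vtheta_k)\|^2$ contribution that falls out of the second raw moment of $\vh_\rvv$ so that it lands with the right coefficient in \cref{eq:smooth}. No convexity is used at this stage; strong convexity of $f$ enters only afterwards, when \cref{lem:smooth} is chained together with the gradient-domination (PL) inequality to establish \cref{thm:convergence}.
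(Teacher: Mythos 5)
Your route is the same as the paper's: the $L$-smoothness quadratic bound along $\vtheta_{k+1}=\vtheta_k-\alpha\vh_\rvv(\vtheta_k)$, conditional expectation over the fresh step-$k$ perturbation, unbiasedness (\cref{eq:expectation}) for the cross term, and the second moment of $\vh_\rvv$ via \cref{eq:variance} for the quadratic term; your care about the measurability of $\vtheta_k$ and $\hat f_k$ with respect to the earlier perturbations is a genuine improvement on the paper's one-line justification. The problem is the last step. Your own bias--variance identity gives $\E\|\vh_\rvv(\vtheta_k)\|^2=\|\nabla f(\vtheta_k)\|^2+(d-1)\|\nabla f(\vtheta_k)-\nabla\hat f_k(\vtheta_k)\|^2$, hence
\begin{equation*}
\E[f(\vtheta_{k+1})]-f(\vtheta_k)\;\le\;-\alpha\Bigl(1-\tfrac{\alpha L}{2}\Bigr)\|\nabla f(\vtheta_k)\|^2+\tfrac{\alpha^2 L(d-1)}{2}\,\|\nabla f(\vtheta_k)-\nabla\hat f_k(\vtheta_k)\|^2 .
\end{equation*}
Replacing $d-1$ by $d$ only enlarges the coefficient of the surrogate-error term; it does nothing about the leftover $\tfrac{\alpha^2L}{2}\|\nabla f(\vtheta_k)\|^2$, which could be absorbed into $\tfrac{\alpha^2Ld}{2}\|\nabla f(\vtheta_k)-\nabla\hat f_k(\vtheta_k)\|^2$ only if $\|\nabla f(\vtheta_k)\|\le\|\nabla f(\vtheta_k)-\nabla\hat f_k(\vtheta_k)\|$ --- exactly the opposite of the regime the method is designed for. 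Your computation reproduces the stated right-hand side of \cref{eq:smooth} only in the special case $\hat f_k\equiv 0$.

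In fairness, the paper's own proof is a single sentence (``taking expectations \ldots we have the desired inequality'') and appears to make the same silent jump, so you have faithfully reproduced its argument; but as described, neither derivation establishes \cref{eq:smooth} for a nonzero surrogate. If you want a self-contained and correct write-up, state and prove the displayed (weaker) one-step bound instead, and then track its consequences through \cref{thm:convergence}: the modified first term forces a restriction of roughly $\alpha\le 1/L$ no matter how small the relative error $r$ is, which tightens the learning-rate ranges claimed in parts (b) and (c).
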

    \begin{proof}
        Since $f$ is $L$-smooth, we have
        \begin{align}
            f(\vtheta_{k+1}) - f(\vtheta_k) & \leq \nabla f(\vtheta_k) \cdot (\vtheta_{k+1} - \vtheta_{k}) + \frac{L}{2} \|\vtheta_{k+1} - \vtheta_{k} \| \\
                                            & = \alpha \nabla f(\vtheta_k) \cdot \vh_\rvv(\vtheta_k) + \frac{\alpha^2 L}{2} \|\vh_\rvv(\vtheta_k) \|
        \end{align}
        Then, taking expectations with respect to $\rvv$ at the $k$th step and using \cref{eq:expectation} and \cref{eq:variance},
        we have the desired inequality.
    \end{proof}

    \begin{lemma}[\citet{Polyak1963-ys}]
        \label{lem:convex}
        $\mu$-strong convexity implies $\mu$-Polyak-Łojasiewicz inequality:
        \begin{equation}
            2\mu(f(\vtheta) - f(\vtheta_*)) \leq \|\nabla f(\vtheta)\|^2
        \end{equation}
    \end{lemma}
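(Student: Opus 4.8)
The plan is to prove the Polyak-Łojasiewicz inequality directly from the definition of $\mu$-strong convexity by minimizing both sides of the defining inequality over a free argument. Recall that $\mu$-strong convexity means that for every $\vtheta, \vy \in \R^d$,
\begin{equation}
    f(\vy) \geq f(\vtheta) + \nabla f(\vtheta) \cdot (\vy - \vtheta) + \frac{\mu}{2}\norm{\vy - \vtheta}^2.
\end{equation}
The key observation is that the right-hand side, viewed as a function of $\vy$ with $\vtheta$ held fixed, is a strictly convex quadratic that globally lower-bounds $f$.

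First I would fix $\vtheta$ and minimize the quadratic lower bound $q(\vy) := f(\vtheta) + \nabla f(\vtheta) \cdot (\vy - \vtheta) + \frac{\mu}{2}\norm{\vy - \vtheta}^2$ over $\vy$. Setting $\nabla_\vy q(\vy) = \nabla f(\vtheta) + \mu(\vy - \vtheta) = \vzero$ gives the unique minimizer characterized by $\vy - \vtheta = -\frac{1}{\mu}\nabla f(\vtheta)$, and substituting back yields the minimum value $\min_\vy q(\vy) = f(\vtheta) - \frac{1}{2\mu}\norm{\nabla f(\vtheta)}^2$.

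Next I would pass to the minimum over $\vy$ on both sides of the strong convexity inequality. Since $f(\vy) \geq q(\vy) \geq \min_\vz q(\vz)$ for every $\vy$, taking the minimum of the left-hand side gives $f(\vtheta_*) = \min_\vy f(\vy) \geq \min_\vz q(\vz)$, where $\vtheta_*$ is the global minimizer guaranteed to exist and be unique by strong convexity. Combining this with the explicit minimum computed above yields $f(\vtheta_*) \geq f(\vtheta) - \frac{1}{2\mu}\norm{\nabla f(\vtheta)}^2$, and rearranging produces exactly $2\mu(f(\vtheta) - f(\vtheta_*)) \leq \norm{\nabla f(\vtheta)}^2$, as claimed.

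The argument involves no genuine obstacle, as this is a classical result; the only point demanding care is the logical direction when passing to minima. A pointwise upper bound does not in general transfer to the value at the minimizer, but the correct chain $f(\vy) \geq \min_\vz q(\vz)$ \emph{for all} $\vy$ does legitimately yield $\min_\vy f(\vy) \geq \min_\vz q(\vz)$, and this is the single step where a careless argument could fail. I would therefore state this transition explicitly rather than asserting it by analogy.
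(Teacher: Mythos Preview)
Your proof is correct and is the standard argument for this classical result. The paper itself does not give a proof at all but simply defers to a reference (\emph{Bottou et al.}), so your write-up actually supplies more than the paper does; there is nothing further to compare.
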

    \begin{proof}
        The proof can be found in \citet{Bottou2016-zs}.
    \end{proof}
    Now let us prove \cref{thm:convergence}.
    \begin{enumerate} [label=(\alph*)]
        \item \label{proof:a}
              When $\hat f_k\equiv0$, \cref{eq:smooth} reduces to
              \begin{align}
                  \E[f(\vtheta_{k+1})] - f(\vtheta_k)
                   & \leq -\alpha \|\nabla f(\vtheta_k)\|^2 + \frac{\alpha^2 Ld}{2} \|\nabla f(\vtheta_k) \|^2 \\
                   & = - \alpha (1 - \frac{\alpha L d}{2}) \|\nabla f (\vtheta_k) \|^2                         \\
                   & \leq - \frac{\alpha}{2} \|\nabla f (\vtheta_k) \|^2                                       \\
                   & \leq - \alpha \mu (f(\vtheta_k) - f(\vtheta_*)).
              \end{align}

              Subtracting $f(\vtheta_*)$, taking total expectations, and rearranging, this yields
              \begin{equation}
                  \E[f(\vtheta_{k+1}) - f(\vtheta_*)] \leq (1-\alpha \mu)\E[f(\vtheta_{k}) - f(\vtheta_*)].
              \end{equation}
              By applying this inequality repeatedly, the desired inequality follows.

        \item When $\sup_{\vtheta\in\R^d}\{ \frac{\|\nabla f(\vtheta) - \nabla \hat f(\vtheta)\|^2}{\|\nabla f(\vtheta)\|^2}\} \leq r$,
              \cref{eq:smooth} reduces to
              \begin{equation}
                  \E[f(\vtheta_{k+1})] - f(\vtheta_k) \leq -\alpha \|\nabla f(\vtheta_k)\|^2 + \frac{\alpha^2 Ldr}{2} \|\nabla f(\vtheta_k) \|^2.
              \end{equation}
              Then, the rest of the proof is the same as \ref{proof:a}

        \item By \cref{lem:convex}, \cref{eq:smooth} gives
              \begin{equation}
                  \E[f(\vtheta_{k+1})] - f(\vtheta_k) \leq -2\alpha \mu (f(\vtheta_k) - f(\vtheta_*))  + \frac{\alpha^2 Ld}{2} \|\nabla f(\vtheta_k) - \nabla \hat f_k(\vtheta_k) \|^2.
              \end{equation}
              Subtracting $f(\vtheta_*)$, taking total expectations, and rearranging, this yields
              \begin{equation}
                  \E[f(\vtheta_{k+1}) - f(\vtheta_*)] \leq (1-2\alpha \mu)\E[f(\vtheta_{k}) - f(\vtheta_*)]
                  + \frac{\alpha^2 Ld}{2} \E[\|\nabla f(\vtheta_k) - \nabla \hat f_k(\vtheta_k) \|^2].
              \end{equation}
              Using the assumption \cref{eq:assumption}, we have
              \begin{align}
                \E[f(\vtheta_{k+1}) - f(\vtheta_*)]
                 & \leq (1-2\alpha \mu)\E[f(\vtheta_{k}) - f(\vtheta_*)]
                + \frac{\alpha^2 Ldr}{2} \xi^k \|\nabla f(\vtheta_0) - \nabla \hat f_0(\vtheta_0)\|^2 \\
                 & \leq (1-2\alpha \mu)\E[f(\vtheta_{k}) - f(\vtheta_*)]
                + \alpha \mu C \xi^k,
            \end{align}
            where
            \begin{equation}
                C = \max\{\frac{\alpha L d \|\nabla f(\vtheta_0) - \nabla \hat f_0(\vtheta_0)\|^2}{\mu}, f(\vtheta_0) - f(\vtheta_*) \}.
            \end{equation}
            Using this inequality, we prove \cref{eq:online} by induction. For $k=0$, we have
            \begin{align}
                \E[f(\vtheta_{1}) - f(\vtheta_*)]
                 & \leq (1-2\alpha \mu)(f(\vtheta_{0}) - f(\vtheta_*)) +
                \alpha \mu C                                             \\
                 & \leq \rho C.
            \end{align}
            Assume \cref{eq:online} holds for $k$. Then, we have
            \begin{align}
                \E[f(\vtheta_{k+1}) - f(\vtheta_*)]
                 & \leq (1-2\alpha \mu)C\rho^k
                + \alpha \mu C \xi^k                                                                         \\
                 & \leq C \rho^k \left( 1 - 2 \alpha \mu + \alpha \mu \left(\frac{\xi}{\rho}\right)^k\right) \\
                 & \leq C \rho^k \left( 1 - 2 \alpha \mu + \alpha \mu \right)                                \\
                 & \leq C \rho^{k+1}.
            \end{align}

    \end{enumerate}
\end{proof}

\subsection{Analysis for the finite difference approximation}
\label{app:finite difference}

\begin{theorem}
    Let $\rv_i$'s be independent Rademacher variables and $f:\R^d\to\R$ be $L$-smooth.
    Then, we have the following bounds for the control variate forward gradient
    $\vh_{\rvv, \epsilon}(\vtheta)$ for all $\vtheta \in \R^d$:
    \begin{equation}
        \label{thm:hv mean}
        \norm{\E[\vh_{\rvv, \epsilon}(\vtheta)] - \nabla f(\vtheta)} \leq \frac{\epsilon L d^{3/2}}{2},
    \end{equation}
    and
    \begin{equation}
        \label{thm:hv var}
        \E[ \norm{\vh_{\rvv, \epsilon}(\vtheta) - \nabla f(\vtheta)}^2] 
        \leq \frac{\epsilon^2 L^2 d^3}{4} + 2 (d-1) \frac{\epsilon L d^{3/2}}{2} \norm{\nabla f(\vtheta) - \nabla \hat f(\vtheta)} + (d-1) \norm{\nabla f(\vtheta) - \nabla \hat f(\vtheta)}^2\\
    \end{equation}
\end{theorem}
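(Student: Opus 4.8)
The plan is to reduce everything to the exact control variate forward gradient $\vh_\rvv$, whose mean and mean-squared deviation are already pinned down in \cref{thm:property}, and to control only the finite-difference discrepancy. The starting observation is that the surrogate terms in $\vh_{\rvv,\epsilon}$ and $\vh_\rvv$ coincide, so
\begin{equation}
    \vh_{\rvv,\epsilon}(\vtheta) - \vh_\rvv(\vtheta) = \vg_{\rvv,\epsilon}(\vtheta) - \vg_\rvv(\vtheta) = e_\epsilon(\vtheta,\rvv)\,\rvv,
\end{equation}
where $e_\epsilon(\vtheta,\rvv) = \tfrac{f(\vtheta+\epsilon\rvv) - f(\vtheta)}{\epsilon} - \nabla f(\vtheta)\cdot\rvv$ is the scalar finite-difference error of the directional derivative. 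First I would bound $|e_\epsilon|$ using $L$-smoothness: the standard quadratic estimate gives $|f(\vtheta+\epsilon\rvv) - f(\vtheta) - \epsilon\nabla f(\vtheta)\cdot\rvv| \le \tfrac{L}{2}\norm{\epsilon\rvv}^2$, and since $\rvv$ is Rademacher we have $\norm{\rvv}^2 = d$, hence $|e_\epsilon(\vtheta,\rvv)| \le \tfrac{\epsilon L d}{2}$ surely. Consequently $\norm{\vh_{\rvv,\epsilon}(\vtheta) - \vh_\rvv(\vtheta)} = |e_\epsilon|\,\norm{\rvv} \le \tfrac{\epsilon L d^{3/2}}{2}$ almost surely.

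For \cref{thm:hv mean}, I would use $\E[\vh_\rvv(\vtheta)] = \nabla f(\vtheta)$ from \cref{thm:property}, so that $\E[\vh_{\rvv,\epsilon}(\vtheta)] - \nabla f(\vtheta) = \E[\vh_{\rvv,\epsilon}(\vtheta) - \vh_\rvv(\vtheta)]$; the triangle inequality for expectations together with the sure bound just established yields $\norm{\E[\vh_{\rvv,\epsilon}(\vtheta)] - \nabla f(\vtheta)} \le \E[\norm{\vh_{\rvv,\epsilon}(\vtheta) - \vh_\rvv(\vtheta)}] \le \tfrac{\epsilon L d^{3/2}}{2}$.

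For \cref{thm:hv var}, I would write $\vh_{\rvv,\epsilon}(\vtheta) - \nabla f(\vtheta) = A + B$ with $A = e_\epsilon(\vtheta,\rvv)\,\rvv$ and $B = \vh_\rvv(\vtheta) - \nabla f(\vtheta)$, and expand $\E[\norm{A+B}^2] = \E[\norm{A}^2] + 2\,\E[A\cdot B] + \E[\norm{B}^2]$. The first term is at most $\tfrac{\epsilon^2 L^2 d^3}{4}$ by the sure bound on $|e_\epsilon|$ (and $\norm{\rvv}^2 = d$), while the last term is exactly $(d-1)\norm{\nabla f(\vtheta) - \nabla\hat f(\vtheta)}^2$ by \cref{thm:property}. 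For the cross term, setting $\vw := \nabla f(\vtheta) - \nabla\hat f(\vtheta)$ and using $\E_\rvv[\hat{\vg_\rvv}(\vtheta)] = \nabla\hat f(\vtheta)$, one has $B = (\vw\cdot\rvv)\rvv - \vw$, so
\begin{equation}
    A\cdot B = e_\epsilon\big[(\vw\cdot\rvv)\norm{\rvv}^2 - \vw\cdot\rvv\big] = (d-1)\,e_\epsilon\,(\vw\cdot\rvv),
\end{equation}
and then $|e_\epsilon| \le \tfrac{\epsilon L d}{2}$ and $|\vw\cdot\rvv| \le \norm{\vw}\sqrt{d}$ give $2\,\E[A\cdot B] \le 2(d-1)\tfrac{\epsilon L d^{3/2}}{2}\norm{\nabla f(\vtheta) - \nabla\hat f(\vtheta)}$. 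Summing the three estimates produces the claimed bound.

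The only real subtlety — the step I would be most careful with — is the cross term: obtaining the clean factor $(d-1)$ rather than a looser polynomial in $d$ relies on the cancellation $(\vw\cdot\rvv)\norm{\rvv}^2 - \vw\cdot\rvv = (d-1)(\vw\cdot\rvv)$, which in turn uses $\norm{\rvv}^2 = d$ exactly; this is why the statement is restricted to Rademacher $\rvv$ rather than general zero-mean unit-variance coordinates. Everything else is bookkeeping once the sure bound on $|e_\epsilon|$ is in hand.
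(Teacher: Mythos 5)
Your proposal is correct and follows essentially the same route as the paper's proof: both isolate the scalar finite-difference error $e_\epsilon = D_{\rvv,\epsilon}$, bound it by $\epsilon L d/2$ via $L$-smoothness, decompose $\vh_{\rvv,\epsilon}-\nabla f$ into $D_{\rvv,\epsilon}\rvv$ plus the exact deviation $\vh_\rvv-\nabla f$ handled by the earlier variance theorem, and exploit $\norm{\rvv}^2=d$ to get the $(d-1)$ factor in the cross term. The only cosmetic difference is that you simplify $\vh_\rvv-\nabla f$ to $(\vw\cdot\rvv)\rvv-\vw$ and bound the cross term pointwise, whereas the paper splits it into two expectations and recombines them; the algebra and the resulting bounds are identical.
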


\begin{proof}
    Since $f$ is $L$-smooth, we have
    \begin{align}
        \lvert f(\vx) - f(\vy) - \nabla f(\vy) \cdot (\vx - \vy) \rvert \leq \frac{L}{2} \norm{\vx - \vy}^2 
        \quad (\forall \vx, \vy \in \R^d).
    \end{align}
    This inequality with $\vtheta + \epsilon \rvv$ and $\vtheta$ yields
    \begin{align}
        \lvert f(\vtheta + \epsilon \rvv) - f(\vtheta) - \nabla f(\vtheta) \cdot \epsilon \rvv \rvert
        & \leq \frac{L}{2} \norm{\epsilon \rvv}^2 \\
        & = \frac{\epsilon^2 L d}{2}.
    \end{align}
    We denote $D_{\rvv, \epsilon}(\vtheta) := \frac{f(\vtheta + \epsilon \rvv) - f(\vtheta)}{\epsilon} - \nabla f(\vtheta) \cdot \rvv$.
    Note that $\lvert D_{\rvv, \epsilon}(\vtheta) \rvert \leq \epsilon L d / 2$.
    Using this inequality, we can show inequality (\ref{thm:hv mean}) as follows:
    \begin{align}
        \norm{\E[\vh_{\rvv, \epsilon}(\vtheta)] - \nabla f(\vtheta)}
        & = \norm{ \E[\vg_{\rvv,\epsilon}(\vtheta) - \hat{\vg_\rvv}(\vtheta) + \E[\hat{\vg_\rvv}(\vtheta)]] 
        - \E[\vg_\rvv(\vtheta)]} \\
        & = \norm{ \E[\vg_{\rvv,\epsilon}(\vtheta) - {\vg_\rvv}(\vtheta)]} \\
        & \leq \E[\norm{\vg_{\rvv,\epsilon}(\vtheta) - {\vg_\rvv}(\vtheta)}] \\
        & = \E[\norm {\frac{f(\vtheta + \epsilon \rvv) - f(\vtheta)}{\epsilon}\rvv - (\nabla f(\vtheta) \cdot \rvv) \rvv}] \\
        & = \E[ \lvert D_{\rvv, \epsilon}(\vtheta) \rvert  \norm {\rvv}] \\
        & \leq \frac{\epsilon L d}{2} d^{1/2} \\
        & = \frac{\epsilon L d^{3/2}}{2}.
    \end{align}
    Next, we show inequality (\ref{thm:hv var}). 
    To simplify the notation, we omit the argument $\vtheta$. 
    Then, we have
    \begin{align}
        \E[\norm{\vh_{\rvv, \epsilon} - \nabla f}^2] 
        & = \E[\norm{ (\vg_{\rvv,\epsilon} - \vg_\rvv) + (\vh_\rvv - \nabla f)}^2] \\
        & = \E[\norm{ D_{\rvv, \epsilon} \rvv + (\vh_\rvv - \nabla f)}^2] \\
        & = \E[\norm{ D_{\rvv, \epsilon} \rvv}^2] + 2 \E [D_{\rvv, \epsilon} \rvv \cdot (\vh_\rvv - \nabla f)] + \E[\norm{\vh_\rvv - \nabla f}^2]\\
        & = \E[ {D_{\rvv, \epsilon}}^2 \norm{\rvv}^2] + 2 \E [D_{\rvv, \epsilon} \rvv \cdot (\vg_\rvv - \hat \vg_\rvv + \nabla \hat f - \nabla f)] + (d-1) \norm{\nabla f - \nabla \hat f}^2\\
        & = \E[ {D_{\rvv, \epsilon}}^2 \norm{\rvv}^2] + 2 \E [D_{\rvv, \epsilon} \rvv \cdot (\vg_\rvv - \hat \vg_\rvv)] 
        + 2 \E[D_{\rvv, \epsilon} \rvv] \cdot (\nabla \hat f - \nabla f) + (d-1) \norm{\nabla f - \nabla \hat f}^2\\
        & = \E[ {D_{\rvv, \epsilon}}^2 \norm{\rvv}^2] + 2 \E [D_{\rvv, \epsilon} \rvv \cdot ((\nabla f - \nabla \hat f) \cdot \rvv) \rvv] 
        + 2 \E[D_{\rvv, \epsilon} \rvv] \cdot (\nabla \hat f - \nabla f) + (d-1) \norm{\nabla f - \nabla \hat f}^2\\
        & = \E[ {D_{\rvv, \epsilon}}^2 \norm{\rvv}^2] + 2 \E [D_{\rvv, \epsilon}((\nabla f - \nabla \hat f) \cdot \rvv) \norm{\rvv}^2] 
        + 2 \E[D_{\rvv, \epsilon} \rvv] \cdot (\nabla \hat f - \nabla f) + (d-1) \norm{\nabla f - \nabla \hat f}^2\\
        & = d \E[ {D_{\rvv, \epsilon}}^2] + 2 d \E [D_{\rvv, \epsilon}((\nabla f - \nabla \hat f) \cdot \rvv)] 
        + 2 \E[D_{\rvv, \epsilon} \rvv] \cdot (\nabla \hat f - \nabla f) + (d-1) \norm{\nabla f - \nabla \hat f}^2\\
        & = d \E[ {D_{\rvv, \epsilon}}^2] + 2 (d-1) \E [D_{\rvv, \epsilon} \rvv] \cdot  (\nabla f - \nabla \hat f) + (d-1) \norm{\nabla f - \nabla \hat f}^2\\
        & \leq d (\frac{\epsilon L d}{2})^2 + 2 (d-1) \frac{\epsilon L d^{3/2}}{2} \norm{\nabla f - \nabla \hat f} + (d-1) \norm{\nabla f - \nabla \hat f}^2\\
        & = \frac{\epsilon^2 L^2 d^3}{4} + 2 (d-1) \frac{\epsilon L d^{3/2}}{2} \norm{\nabla f - \nabla \hat f} + (d-1) \norm{\nabla f - \nabla \hat f}^2.
    \end{align}
\end{proof}

\subsection{Details of numerical examples}
\label{app:details}
\subsubsection{Details of \cref{sec:numerical}}
\paragraph{Surrogate model}
In \cref{sec:numerical}, our surrogate model $\hat f$ has two convlutional layers with 64 filters, 
followed by a global average pooling and  a fully connected layer with 64 units.
The kernel size is 1 for the Sphere function and 3 for the Rosenbrock function.
The activation function is GELU \citep{Hendrycks2016-ui}.

\paragraph{Results}
The standard deviation of the objective values are presented in \cref{tab:basic_std},
and convergence plots are shown in \cref{fig:convergence all}.

\begin{figure}[tb]
    \centering
    \begin{subfigure}[b]{0.38\textwidth}
       \centering
       \includegraphics[width=\textwidth]{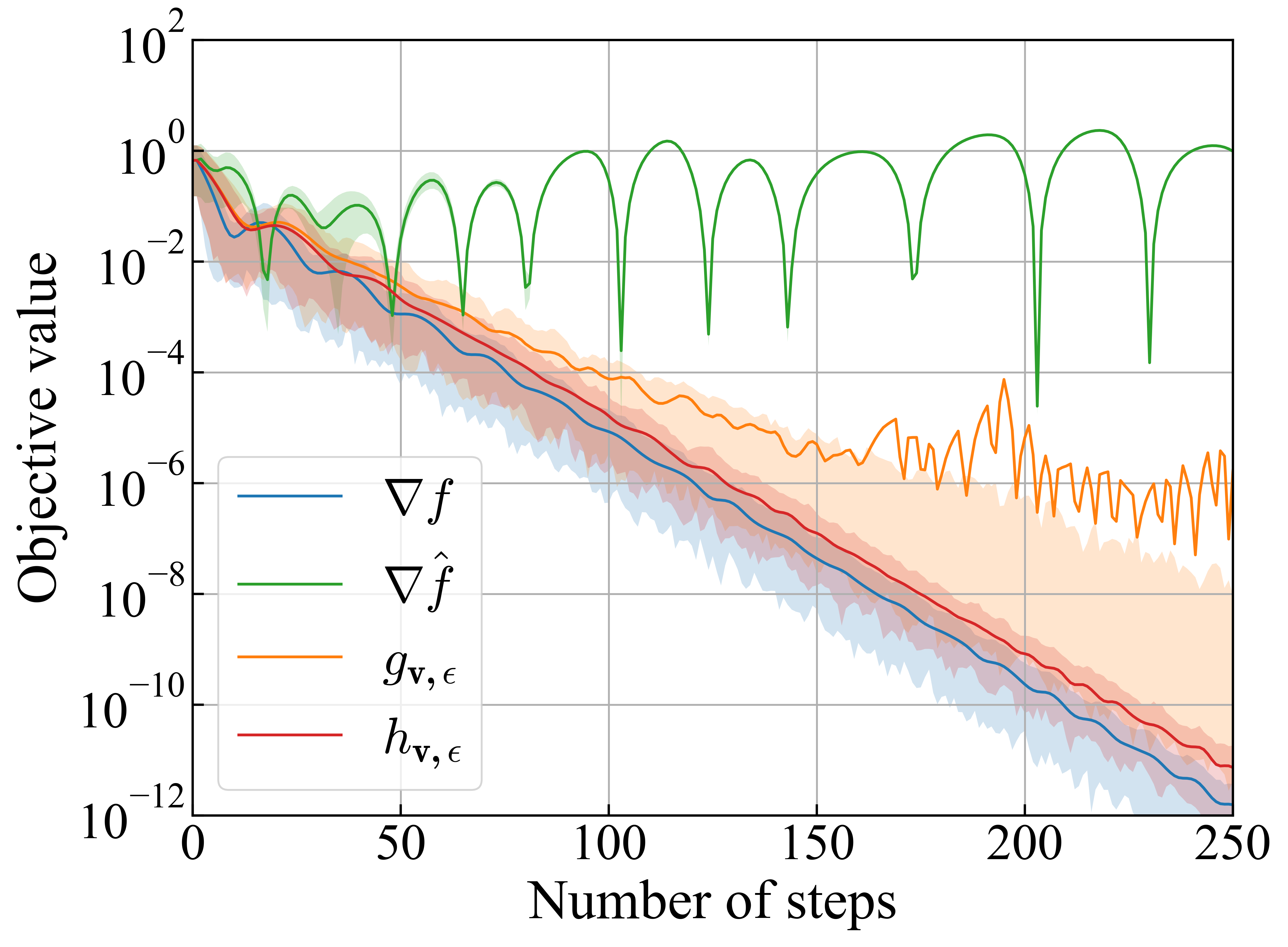}
       \caption{sphere2D}
    \end{subfigure}
    \begin{subfigure}[b]{0.38\textwidth}
       \centering
       \includegraphics[width=\textwidth]{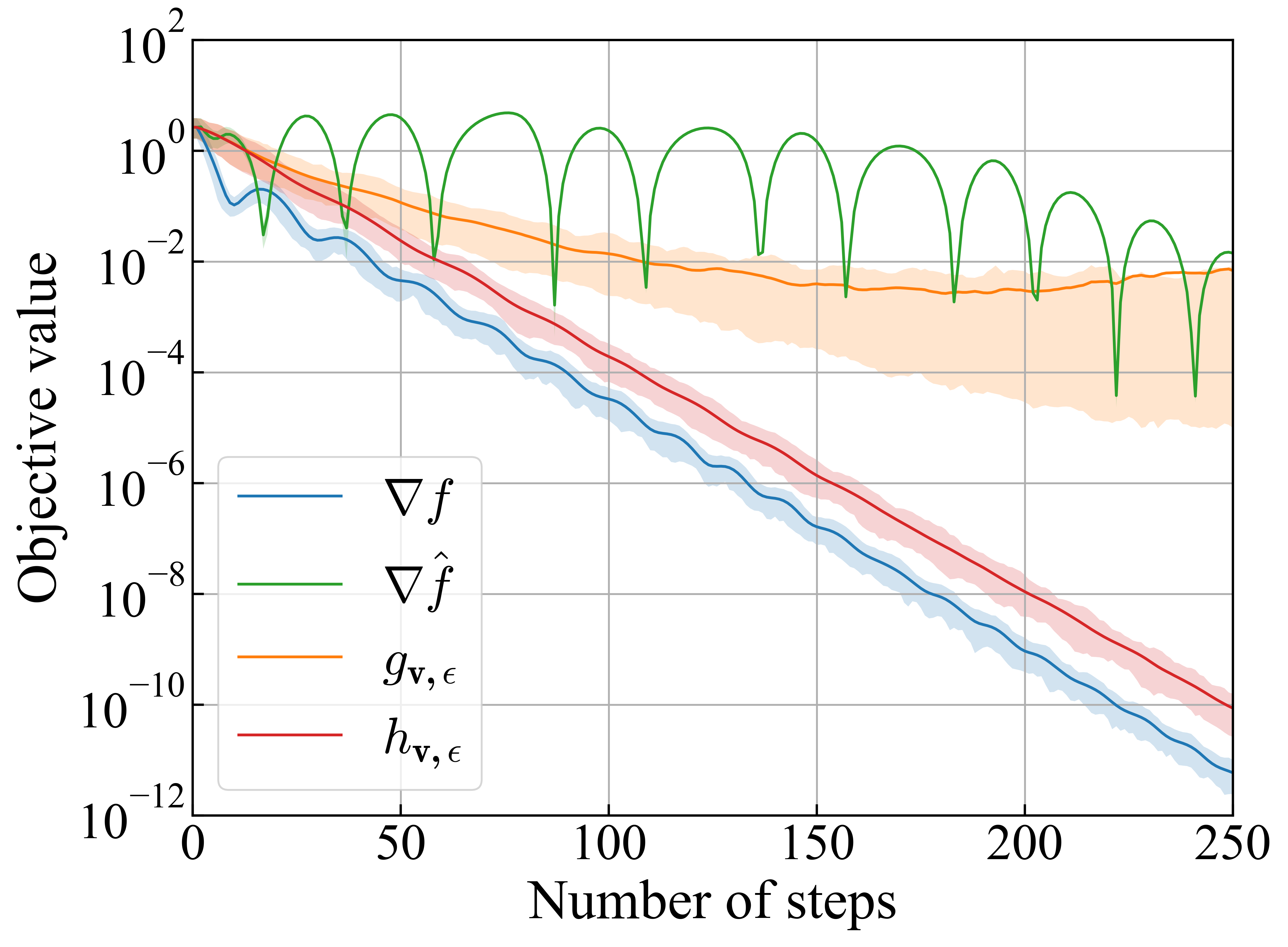}
       \caption{sphere8D}
    \end{subfigure}\\
    \begin{subfigure}[b]{0.38\textwidth}
       \centering
       \includegraphics[width=\textwidth]{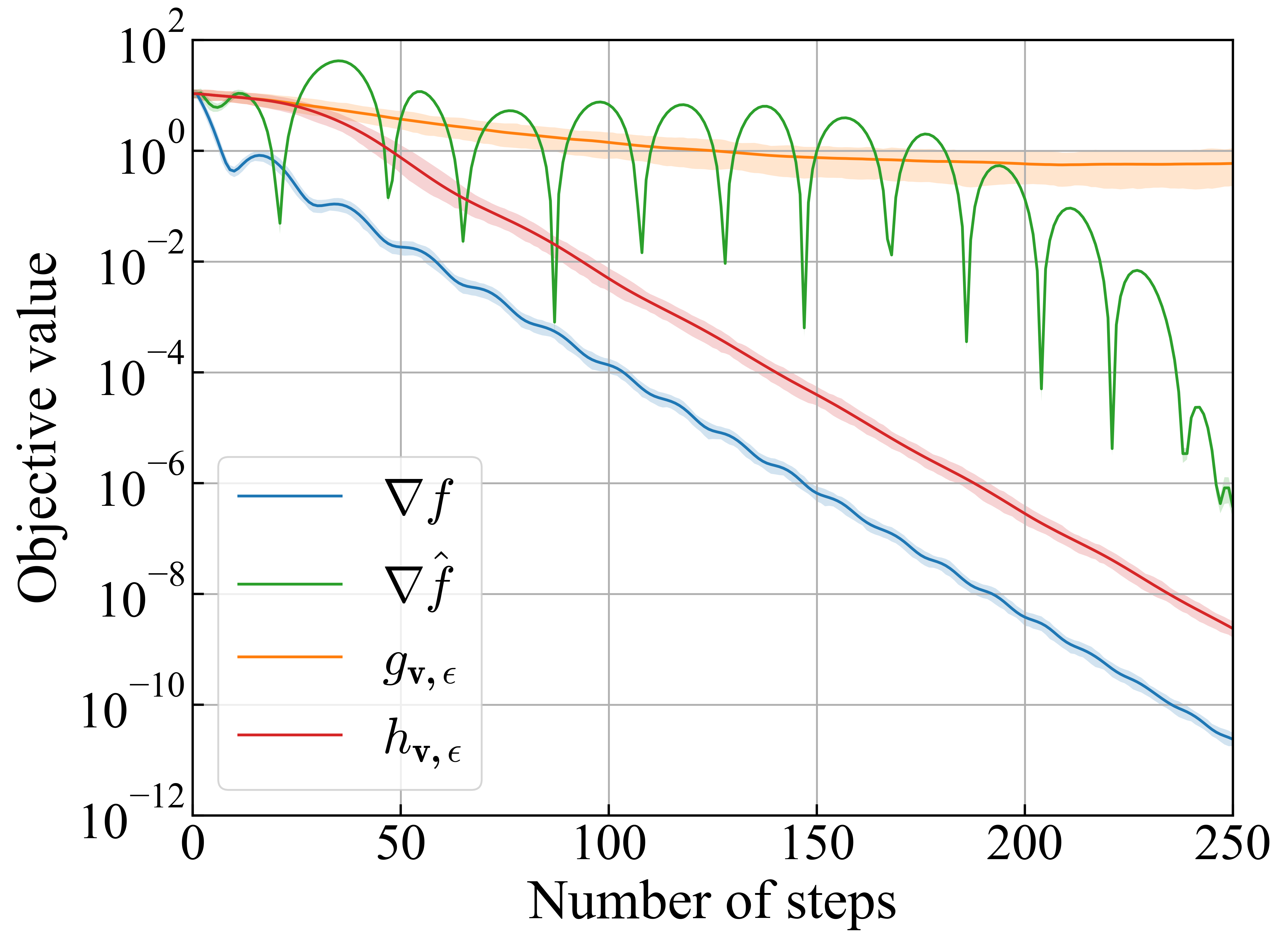}
       \caption{sphere32D}
    \end{subfigure}
    \begin{subfigure}[b]{0.38\textwidth}
       \centering
       \includegraphics[width=\textwidth]{figures/sphere128D.png}
       \caption{sphere128D}
    \end{subfigure}\\
    \begin{subfigure}[b]{0.38\textwidth}
       \centering
       \includegraphics[width=\textwidth]{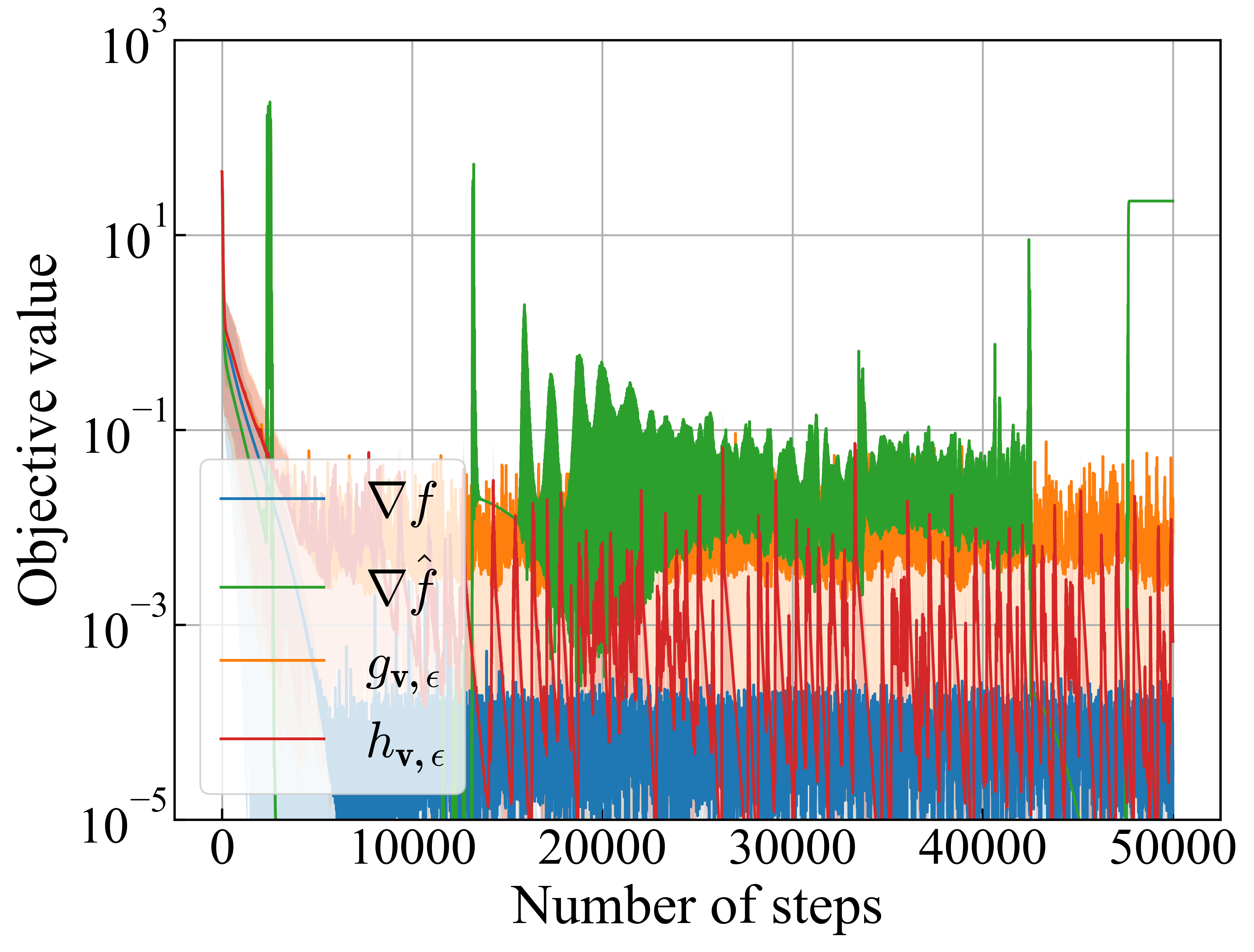}
       \caption{rosenbrock2D}
    \end{subfigure}
    \begin{subfigure}[b]{0.38\textwidth}
       \centering
       \includegraphics[width=\textwidth]{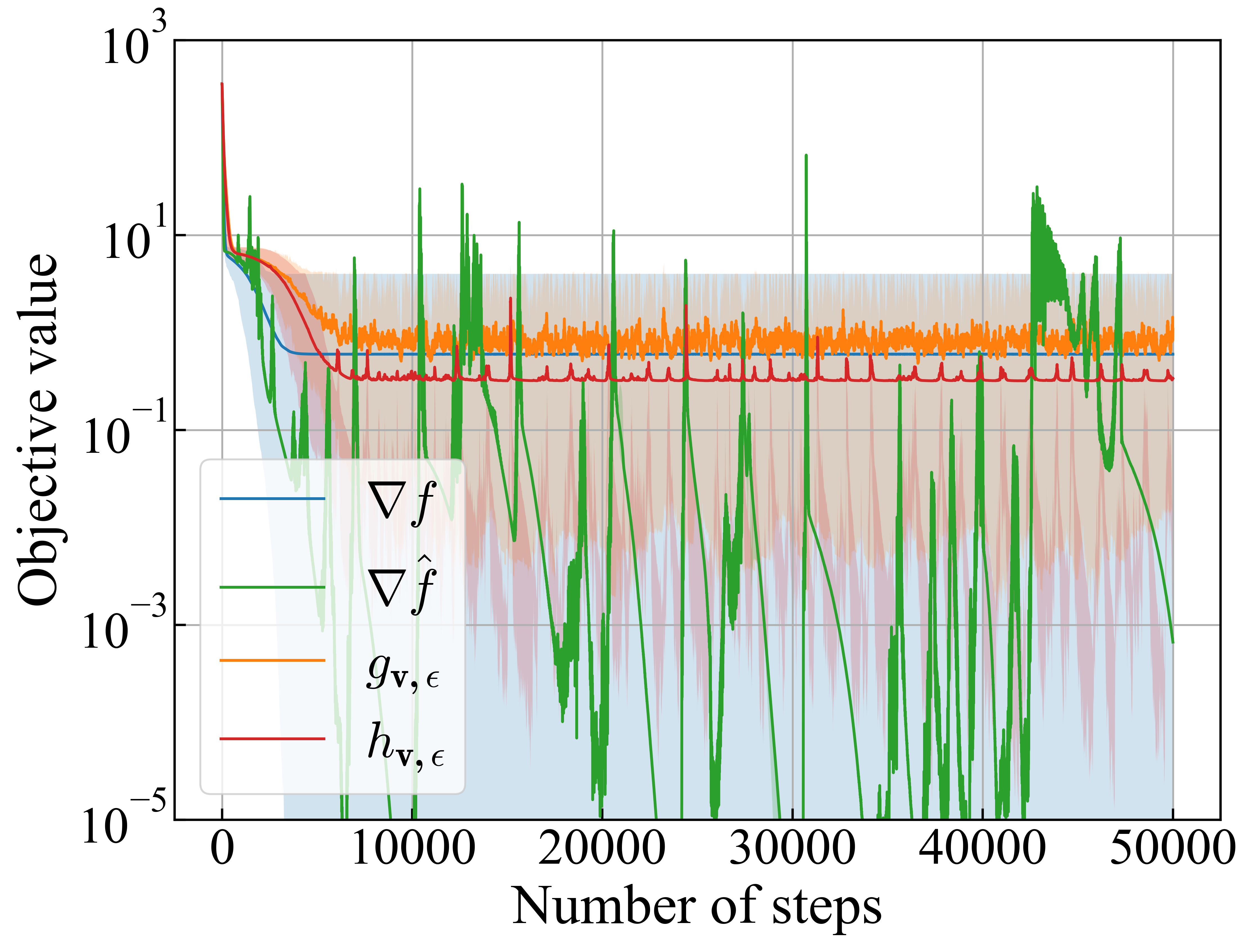}
       \caption{rosenbrock8D}
    \end{subfigure}\\
    \begin{subfigure}[b]{0.38\textwidth}
       \centering
       \includegraphics[width=\textwidth]{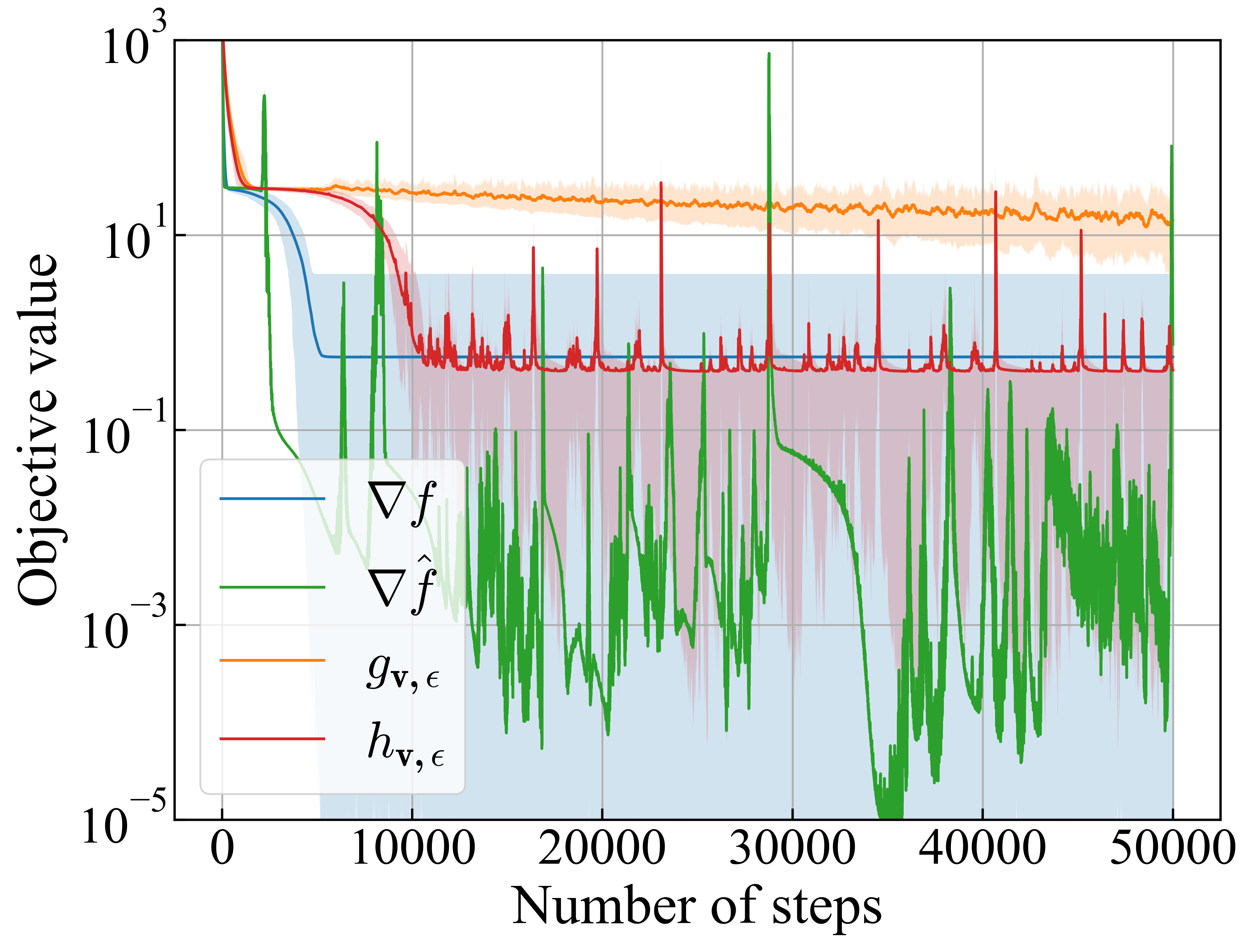}
       \caption{rosenbrock32D}
    \end{subfigure}
    \begin{subfigure}[b]{0.38\textwidth}
       \centering
       \includegraphics[width=\textwidth]{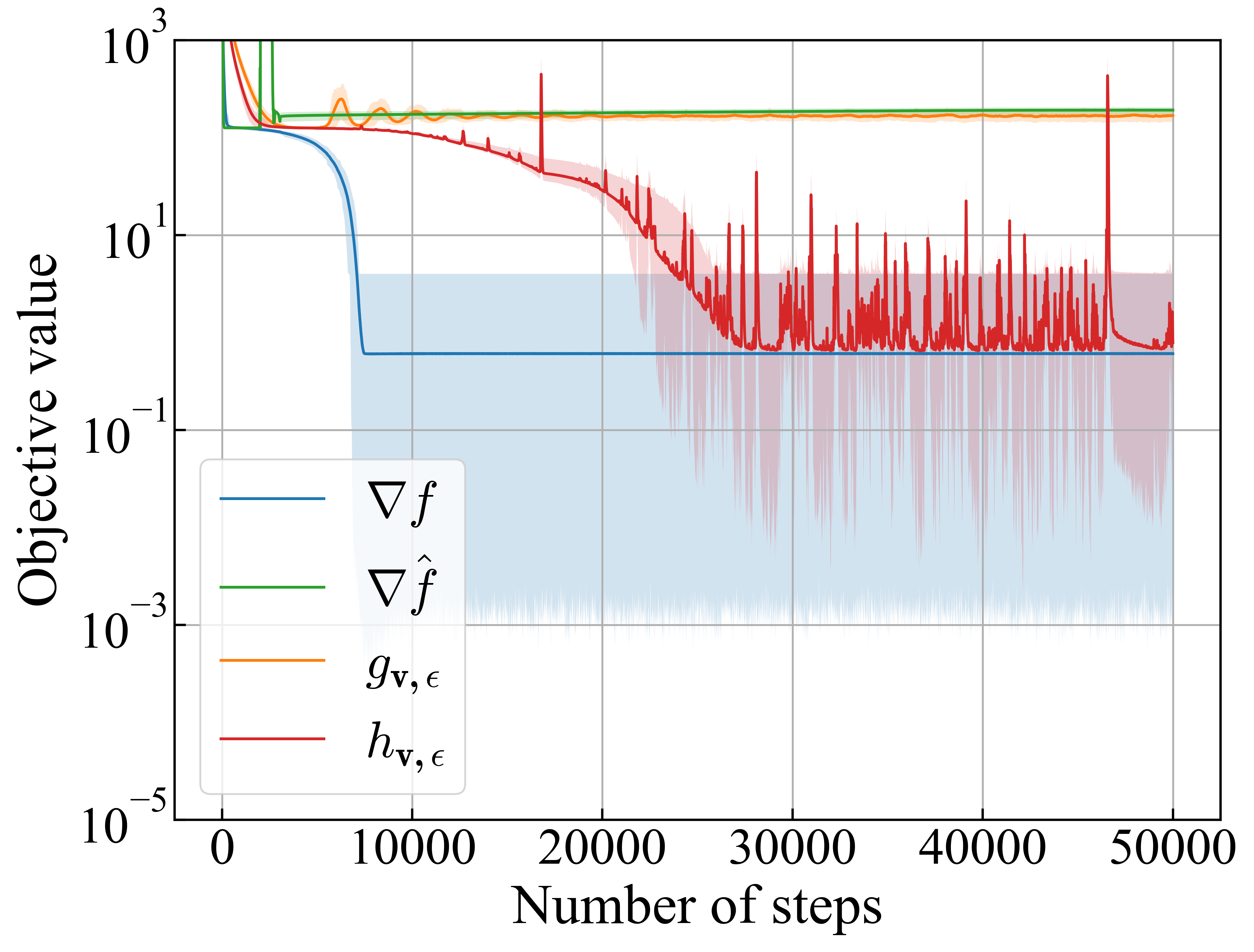}
       \caption{rosenbrock128D}
    \end{subfigure}\\
    \caption{Convergence plots. The horizontal axis is the number of Adam steps, and the vertical axis is the objective value.}
    \label{fig:convergence all}
 \end{figure}

 \begin{table*}[b]
    \caption{The standard deviations of objective values obtained using different gradient estimators.}
    \label{tab:basic_std}
    \begin{subtable}[t]{0.45\textwidth}
    \centering
    \caption{Sphere function}
    \label{tab:sphere_std}
    \begin{small}
    \begin{tabular}{l|rrrr}
        \toprule
        gradient                        & $d=2$    & $d=8$    & $d=32$   & $d=128$  \\
        \midrule
          $\nabla f$                    & 1.61e-12 & 3.02e-12 & 5.23e-12 & 1.04e-11 \\
          $\nabla \hat f$               & 2.09e-03 & 7.38e-06 & 1.36e-07 & 1.88e-04  \\
          $\vg_{\rvv,\epsilon}$         & 1.57e-05 & 2.71e-02 & 3.44e-01 & 5.86e+00 \\
          $\vh_{\rvv,\epsilon}$ (ours)  & 9.11e-12 & 7.28e-11 & 6.42e-10 & 1.17e-07 \\
          \bottomrule
       \end{tabular}
    \end{small}
    \end{subtable}
    \hfill
    \begin{subtable}[t]{0.45\textwidth}
    \centering
    \caption{Rosenbrock function}
    \label{tab:sphere_std}
    \begin{small}
    \begin{tabular}{l|rrrr}
        \toprule
        gradient                        & $d=2$    & $d=8$ & $d=32$   & $d=128$ \\
        \midrule
          $\nabla f$                    & 4.49e-05  & 1.42e+00 & 1.38e+00    & 1.43e+00   \\
          $\nabla \hat f$               & 1.44e-01  & 9.66e-07  & 3.77e-02 & 6.72e+01     \\
        $\vg_{\rvv,\epsilon}$                    & 3.59e-02 & 3.47e+00  & 9.85e+00     & 1.94e+01     \\
          $\vh_{\rvv,\epsilon}$ (ours)  & 9.95e-04 & 1.08e+00 & 1.20e+00    & 1.52e+00   \\
          \bottomrule
        \end{tabular}
    \end{small}
    \end{subtable}
 \end{table*}

 \subsubsection{Details of \cref{sec:poisson}}

 \paragraph{Task}
 In both of $P$ and $Q$, solution $u_\tau$ has the following form:
\begin{equation}
    \label{eq:u}
    u_\tau(z) = \sum_{i=1}^{N}c_i \sin{ i  \pi  z},
\end{equation}
where $c_i$'s are sampled from different distributions.
The task distribution $P$ is a mixture of two distributions $P_1$ and $P_2$ with weight $p$, i.e. $P=pP_1+(1-p)P_2$,
and $p$ is set to 0.01 in the experiment.
In distribution $P_1$, we have $c_i \sim \mathcal{N}(0, \left|\frac{N+1-2i}{N-1}\right|)$.
In distribution $P_2$, we have $c_i \sim \mathcal{N}(0, 1 - \left|\frac{N+1-2i}{N-1}\right|)$.
Similarly, the task distribution $Q$ is a mixture of two distributions $Q_1$ and $Q_2$ with weight $q$, i.e. $Q=qQ_1+(1-q)Q_2$,
and $q$ is set to 0.01 in the experiment.
In distribution $Q_1$, we have $c_i \sim \mathcal{N}(0, 1 - \frac{i}{N})$.
In distribution $Q_2$, we have $c_i \sim \mathcal{N}(0, \frac{i}{N})$.
We use $P$ for the Jacobi method and $Q$ for the multigrid method.
They are designed to contain a small number of difficult tasks, which make learning difficult \cite{Arisaka2023-aw}.
The discretization size is $N=31$ in the experiments.
The number of tasks for training, validation, and test are all $5,000$.

\paragraph{Solver}
The solver $f_{\text{MG}}$ is the simplest two-level multigrid solver with the linear prolongation 
\begin{equation}
    I_{2 h}^h=\frac{1}{2}\left(\begin{array}{ccccc}
    1 & & & & \\
    2 & & & & \\
    1 & 1 & & & \\
    & 2 & & & \\
    & 1 & 1 & & \\
    & & 2 & \ddots & \\
    & & 1 & \ddots & \\
    & & & & 1 \\
    & & & 2 \\
    & & & & 1
    \end{array}\right) \in \mathbb{R}^{(N-1) \times(N / 2-1)}
    \end{equation}
and weighted restriction
\begin{equation}
I_h^{2 h}=\frac{1}{4}\left(\begin{array}{lllllllllllll}
1 & 2 & 1 & & & & & & & & \\
& & 1 & 2 & 1 & & & & & & \\
& & & & 1 & 2 & 1 & & & & \\
& & & & & & & \ddots & & & \\
& & & & & & & & & \\
& & & & & & & & 1 & 2 & 1
\end{array}\right) \in \mathbb{R}^{(N / 2-1) \times(N-1)}.
\end{equation}
In each level, we conduct 4 iterations of the weighted Jacobi method with weight $2/3$ as a smoother.

\paragraph{Meta-solver}
The meta-solver $\Psi$ is a fully-connected neural network with one hidden layer with 512 units and SiLU activation function \citep{Elfwing2018-ae}.
It takes the right-hand side $b_\tau$ as input and gives the coefficients $c_i$'s in \cref{eq:u} to construct initial guess $\theta_\tau$.

\paragraph{Surrogate model}
In \cref{sec:poisson}, our surrogate model $\hat f$ is a fully-connected neural network with 
three hidden layers with 1024 units and GELU activation function.
It takes the initial guess $\vtheta_\tau$ and the solver's output $\hat u_\tau$ as input.

\paragraph{Training}
We use the Adam optimizer for all training.
For the Jacobi method, we use learning rate $10^{-5}$ for the meta-solver and $5.0 \times 10^{-4}$ for the surrogate model.
For the multigrid method, we use learning rate $10^{-6}$ for the meta-solver and $5.0 \times 10^{-4}$ for the surrogate model.
For the supervised baseline, we use learning rate $10^{-4}$.
The batch size is set to 256 for all training, and the number of epochs is set to 100.
The best model is selected based on the validation loss.
The finite difference step size $\epsilon$ is set to $10^{-12}$ for all training.
We conducted the experiments with 3 different random seeds.

\paragraph{Results}
The standard deviations of the results \cref{tab:number of iterations} are presented in \cref{tab:number of iterations std}.

\begin{table}[tb]
    \caption{The standard deviations of the results in \cref{tab:number of iterations}.}
    \label{tab:number of iterations std}
    \centering
    \begin{small}
    \begin{tabular}{ll|rr}
        \toprule
        $\Psi$               & gradient                 & $f = f_{\mathrm{Jac}}$ & $f = f_{\mathrm{MG}}$ \\
        \midrule
        $\Psi_\mathrm{SL}$   & -                                &   1.83      &  0.86    \\
        $\Psi_{\mathrm{GBMS}}$ & $\nabla f$                     &  0.70            &  0.45        \\
                             & $\nabla \hat f$                  &  1.35        &     3.85     \\
                             & $\vg_\rvv$                       &   1.70           &     0.22    \\
                             & $\vg_{\rvv,\epsilon}$            &   1.68          &    0.19  \\
                             & $\vh_\rvv$                       &  1.09   &    0.60    \\
                             & $\vh_{\rvv,\epsilon}$ (ours)     &   0.71           &    0.62  \\
        \bottomrule
    \end{tabular}
    \end{small}
\end{table}

\subsubsection{Details of \cref{sec:advanced}}

\paragraph{Task}
The problem formulation is based on the demos in the DOLFIN documentation \cite{noauthor_undated-zc}.
As for the discretization size, we use $3 \times 3$ unit square mesh for the biharmonic problem 
and $3 \times 3 \times 3$ unit box mesh for the elasticity problem.
For the biharmonic problem, the source term $b_\tau$ has the following form:
\begin{equation}
    b_\tau(x,y) = c_1 \sin(c_2 \pi x)\sin(c_3\pi y),
\end{equation}
where 
\begin{align}
    c_1 & \sim \mathrm{LogUniform}([10^{-2}, 10^2]), \\
    c_2 & \sim \mathrm{DiscreteUniform}([1, 2, 3, 4, 5]), \\
    c_3 & \sim \mathrm{DiscreteUniform}([1, 2, 3, 4, 5]).
\end{align}

The number of tasks for training, validation, and test are all $5,000$ for both problem settings.

\paragraph{Solver}
The solver $f_{\text{AMG}}$ is constructed by setting the following PETSc options:
\begin{itemize}
    \item \texttt{-ksp\_type richardson}
    \item \texttt{-ksp\_initial\_guess\_nonzero True}
    \item \texttt{-pc\_type gamg}
\end{itemize}
Other options are set to default values.

\paragraph{Meta-solver}
The meta-solver $\Psi$ is a fully-connected neural network with one hidden layer with 1024 units and SiLU activation function 
for both biharmonic and elasticity problem.

\paragraph{Surrogate model}
Surrogate model $\hat f$ is a fully-connected neural network with three hidden layers with 512 units and GELU activation function
for both biharmonic and elasticity problem.
It takes the initial guess $\vtheta_\tau$ and the solver's solution as input.

\paragraph{Training}
We use the Adam optimizer for all training.
For the biharmonic problem, we use learning rate $10^{-5}$ for the meta-solver and $5.0 \times 10^{-4}$ for the surrogate model.
For the elasticity problem, we use learning rate $10^{-4}$ for the meta-solver and $10^{-4}$ for the surrogate model.
The batch size is set to 256 for all training. 
We train them for 1,000 epochs and reduce the learning rate by a factor of 0.1 at 500 and 750 epochs.
The best model is selected based on the validation loss.

\newpage
\theoremstyle{plain}
\newtheorem*{theorem*}{Theorem}
\newtheorem*{lemma*}{Lemma}
\setcounter{equation}{0}

\end{document}